\documentclass{article}

\PassOptionsToPackage{numbers}{natbib}
\usepackage[preprint]{main}

\usepackage[utf8]{inputenc} 
\usepackage[T1]{fontenc}    
\usepackage{hyperref}       
\usepackage{url}            
\usepackage{booktabs}       
\usepackage{amsfonts}       
\usepackage{nicefrac}       
\usepackage{microtype}      
\usepackage{xcolor}         
\usepackage{wrapfig}
\usepackage{algorithm}
\usepackage{algpseudocode}
\usepackage{tikz}
\usetikzlibrary{arrows.meta, positioning}
\usepackage{graphicx}
\usepackage{amsmath}
\usepackage{subcaption}
\usepackage{amsthm}
\newtheorem{theorem}{Theorem}

\title{CAST: Time-Varying Treatment Effects with Application to Chemotherapy and Radiotherapy on Head and Neck Squamous Cell Carcinoma}

%


\author{%
  \textbf{Everest Yang}\textsuperscript{\textbf{1,2}}\thanks{Corresponding Author: Everest Yang (\texttt{everest\_yang@brown.edu})} , 
  \textbf{Ria Vasishtha}\textsuperscript{\textbf{1,3}}, 
  \textbf{Luqman K. Dad}\textsuperscript{\textbf{4}},
  \textbf{Lisa A. Kachnic}\textsuperscript{\textbf{4}},
  \textbf{Andrew Hope}\textsuperscript{\textbf{5}}, \\
  \textbf{Eric Wang}\textsuperscript{\textbf{1}}, 
  \textbf{Xiao Wu}\textsuperscript{\textbf{6}},
  \textbf{Yading Yuan}\textsuperscript{\textbf{4, 7}}, 
  \textbf{David J. Brenner}\textsuperscript{\textbf{1}}, 
  \textbf{Igor Shuryak}\textsuperscript{\textbf{1}} \\\\
  \textsuperscript{1} Center for Radiological Research, Columbia University Irving Medical Center, USA \\
  \textsuperscript{2} Department of Computer Science, Brown University, USA \\
  \textsuperscript{3} Department of Applied Mathematics, Columbia University, USA \\
  \textsuperscript{4} Department of Radiation Oncology, Columbia University Irving Medical Center, USA \\
  \textsuperscript{5} Department of Radiation Oncology, Princess Margaret Cancer Centre, Canada \\
  \textsuperscript{6} Department of Biostatistics, Columbia University Irving Medical Center, USA \\
  \textsuperscript{7} Data Science Institute, Columbia University, USA \\
}

\begin{document}

\maketitle

\begin{abstract} Causal machine learning (CML) enables individualized estimation of treatment effects, offering critical advantages over traditional correlation-based methods. However, existing approaches for medical survival data with censoring such as causal survival forests estimate effects at fixed time points, limiting their ability to capture dynamic changes over time. We introduce Causal Analysis for Survival Trajectories (CAST), a novel framework that models treatment effects as continuous functions of time following treatment. By combining parametric and non-parametric methods, CAST overcomes the limitations of discrete time-point analysis to estimate continuous effect trajectories. Using the RADCURE dataset \citep{0} of 2,651 patients with head and neck squamous cell carcinoma (HNSCC) as a clinically relevant example, CAST models how chemotherapy and radiotherapy effects evolve over time at the population and individual levels. By capturing the temporal dynamics of treatment response, CAST reveals how treatment effects rise, peak, and decline over the follow-up period, helping clinicians determine when and for whom treatment benefits are maximized. This framework advances the application of CML to personalized care in HNSCC and other life-threatening medical conditions. Source code/data available at: https://github.com/CAST-FW/HNSCC

\end{abstract}

\section{Introduction}

\textbf{Methodological gap:} A critical limitation in most traditional statistical and machine learning (ML) methods applied to clinical outcomes data is their correlational nature. These methods are designed to identify associations between variables but are not equipped to answer causal questions, which are central to understanding treatment effects. In clinical research, the key questions—such as how a treatment impacts survival or which patients benefit most—are inherently causal. However, correlational approaches cannot disentangle confounding factors or provide interpretable estimates of causal relationships, leaving a significant methodological gap \citep{1, 2}.

Causal machine learning (CML) offers a promising solution by explicitly modeling causal effects rather than associations. CML is rapidly advancing, providing tools to estimate individualized and subgroup-specific treatment effects \citep{3}. However, current causal forest methods adapted for survival data fall short in one crucial aspect: they estimate treatment effects only at discrete time horizons after treatment \citep{4, 5, csf1}. This approach fails to capture the continuous evolution of treatment effects over time, limiting their ability to address dynamic clinical questions.

\textbf{Proposed approach:} Our novel method, CAST (Causal Analysis for Survival Trajectories), fills this gap by extending causal survival forests to provide continuous treatment effect estimates as a function of time after treatment. CAST combines parametric and non-parametric techniques to model the temporal dynamics of treatment effects, offering a more nuanced and clinically relevant understanding of how treatments impact outcomes over time \citep{6, 7}. We build upon previous work by Shuryak et al. \citep{igor3} to extend it to chemotherapy and continuous-time causal modeling. By addressing this methodological gap, CAST enables clinicians to answer the causal questions that matter most for personalized care and evidence-based decision-making. While traditional approaches estimate treatment effects at discrete time points \citep{8, 9}, CAST provides a continuous mathematical framework, analyzing how treatment benefits evolve over the entire follow-up period. In the context of cancer therapy, this is key: biological responses unfold through complex temporal dynamics that include initial tumor control followed by potential diminishing returns due to repopulation, late toxicities, and other factors \citep{10, 11, igor1}.

\textbf{Clinical motivation:} We evaluate CAST in the context of head and neck squamous cell carcinoma (HNSCC), where treatment responses evolve over time and vary across patient subgroups. HNSCC, ranked as the seventh most prevalent cancer worldwide, includes malignancies of the oral cavity, pharynx, larynx, and other surrounding regions of the head and neck. With incidence rates rising rapidly, HNSCC is projected to increase nearly 30 \% annually by 2030 \citep{12}. Historically, most HNSCC cases were attributed to excessive alcohol and tobacco use, with heavy exposure increasing risk by up to 40-fold \citep{13}. However, the past two decades have seen an increase in human papillomavirus-related (HPV) cases, and HPV-associated HNSCC is expected to surpass tobacco and alcohol induced tumors in the next five years. This has caused a shift in the demographic profile of HNSCC patients: HPV-related cases tend to occur among younger populations (<65), particularly in men \citep{14, 15, 16}.

To treat HNSCC, clinicians use combinations of surgery, chemotherapy, and radiation. Intensity-modulated radiation therapy (IMRT) has become the standard of care for its precision in targeting tumors while sparing healthy tissue \citep{17,18}. Studies show that IMRT’s impact on patient quality of life follows a time-varying trajectory, with distinct peaks of symptom burden and phases of recovery \citep{19, 20}. In a similar vein, chemotherapy—involving agents that disrupt the DNA of rapidly dividing cells—follows a variable treatment timeline \citep{chemo1}. Patients are often advised that responses can differ widely based on individual factors, with effects emerging gradually and no fixed timeline for when benefits or side effects will manifest \citep{chemo2, chemo3}.

The challenge of analyzing radiation therapy and chemotherapy outcomes lies not just in the complexity of the treatment itself, but in the multitude of factors that influence both treatment assignment and patient response. Traditional correlation-based analyses can mask important causal relationships, leading to suboptimal treatment decisions. To the best of our knowledge, CAST represents the first causal machine learning framework to explicitly model how chemotherapy benefits for patient survival rise and fall over the entire follow-up period.

\begin{figure}[h]
\centering
\includegraphics[width=0.99\linewidth]{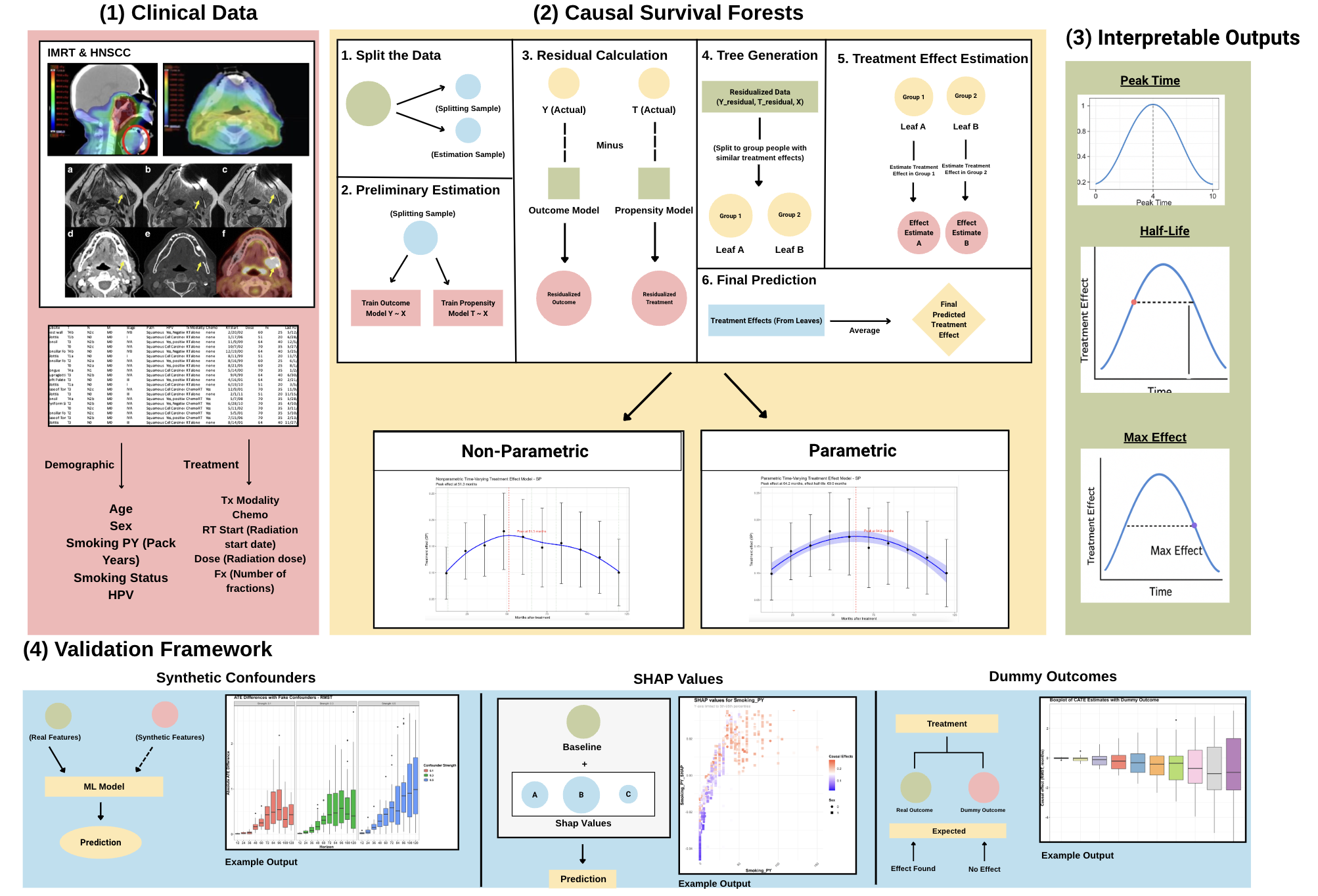}
\caption{Overview of the CAST framework}
\label{fig:cast-overview}
\end{figure}

\textbf{Modeling philosophy:} Our novel machine learning approach leverages causal survival forests to handle high-dimensional data while automatically discovering treatment effect heterogeneity, and differs from conventional survival analysis methods by focusing explicitly on estimating causal treatment effects while accounting for confounding factors through careful propensity score modeling. As demonstrated in Figure 1, the framework incorporates both parametric modeling to capture characteristic rise and fall patterns of chemotherapy effects and non-parametric approaches to reveal subtle inflection points corresponding to biological phase transitions in treatment response.

We implemented a variety of methods to ensure a robust assessment of the data and to verify key causal inference assumptions, such as overlap (positivity) and no unmeasured confounding (ignorability). We used elastic net logistic regression with repeated k-fold cross-validation to estimate propensity scores for chemotherapy. Patients with scores outside the range [0.1, 0.9] were trimmed to ensure overlap between treatment groups. We then conducted refutation tests, including dummy outcome and negative control analyses, to assess the robustness of our causal effect estimates. Using SHapley Additive exPlanations (SHAP) values, we generated interpretable insights into how patient and disease characteristics impact treatment outcomes, allowing for practical application in clinical settings.

\textbf{Significance:} This research applies causal survival forests to identify how patient and disease characteristics—like age and HPV status—influence treatment effectiveness. By combining advanced causal inference with CAST’s temporal modeling, we can determine not just who benefits most from chemotherapy, but also when these benefits peak and fade. This temporal insight is key for designing targeted interventions and optimizing outcomes in HNSCC. CAST also demonstrates the broader potential of integrating machine learning into personalized cancer care.

\paragraph{Our contributions are as follows:}
\begin{itemize}
    \item \textbf{CAST is, to our knowledge, the first framework} to unify causal survival forests with \textit{parametric} and \textit{non-parametric} models for estimating \textbf{continuous-time treatment effects}, offering a new paradigm for temporal causal inference in survival analysis.
    \item CAST produces clinically interpretable metrics such as \textit{peak effect time}, \textit{maximum benefit}, and \textit{effect half-life}, enabling richer understanding of treatment response dynamics.
    \item We introduce a rigorous validation framework incorporating \textit{propensity score modeling}, \textit{dummy outcome tests}, \textit{synthetic tests}, and \textit{SHAP-based heterogeneity analysis}.
    \item We apply CAST to a large real-world chemotherapy and radiotherapy dataset (RADCURE), uncovering actionable insights into when and for whom treatment benefits peak and decline.
\end{itemize}

\section{Related Work}

\textbf{Clinical predictors of treatment response:} Numerous studies have shown that treatment response in HNSCC patients is highly heterogeneous, influenced by clinical and demographic factors such as HPV status, gender, and disease stage. For instance, HPV-positive HNSCC—more common among younger patients—tends to be more sensitive to treatment and is associated with more favorable survival outcomes compared to HPV-negative disease \citep{21}. Historically, studies that group patients by their clinical characteristics reveal significant variation in survival \citep{22, 23}. These findings motivate the need for methods that can model treatment effect heterogeneity as well as average treatment effects (ATE)—an aim CAST directly addresses.

\textbf{Predictive survival models:} Traditional models such as the Cox proportional hazards model assume proportional hazards and constant treatment effects over time, limiting their flexibility in capturing nonlinear and time-varying dynamics \citep{24, 25}. More flexible models—including random survival forests (RSF), deep survival models (e.g., DeepSurv), and Bayesian additive regression trees (BART)—have demonstrated improved risk prediction performance \citep{26}, notably in applications such as cervical cancer survival \citep{27, 28}. However, these models are fundamentally \textit{predictive}, not causal—they estimate outcome risk without isolating treatment effects or correcting for confounding unless explicitly adapted with causal modeling components.

\textbf{Causal inference for survival analysis:} Recent advances in causal machine learning have introduced methods designed to estimate individualized treatment effects (ITEs) from observational time-to-event data \citep{29, added2}. These include meta-learners (e.g., T-learner, S-learner) \citep{30}, G-formula-based two-learners \citep{31}, double robust estimators (e.g., AIPCW and AIPTW) \citep{32}, and causal survival forests \citep{csf2}. While these causal approaches are advantageous for treatment effect estimation compared with traditional survival analysis, they typically estimate effects at discrete time points, limiting their ability to model how treatment responses evolve continuously throughout follow-up \citep{33, 34}.

\textbf{Modeling time-varying treatment effects:} In oncology, treatment effects often unfold through distinct biological phases—initial tumor control, plateauing benefit, and eventual decline due to late toxicities or tumor repopulation \citep{35, 36}. Studies have revealed that the prognostic influence of covariates such as age, race, and sex changes over the follow-up period \citep{37, 38}. However, most existing methods either assume constant effects or treat follow-up intervals independently. CAST addresses this gap by modeling treatment effects as continuous functions of time. By integrating both parametric (e.g., quadratic fits) and non-parametric (e.g., smoothing splines) components, CAST captures biologically grounded patterns in treatment efficacy over time. Unlike previous approaches, CAST provides a unified, continuous-time framework that reveals the full temporal trajectory of treatment response, enabling more precise and interpretable causal insights.

\section{Methodology}

\textbf{Problem Formulation:} We address the challenge of estimating time-varying treatment effects in survival analysis, specifically focusing on how the impact of medical interventions evolves over time. Let $\mathcal{D} = \{(X_i, W_i, T_i, \delta_i)\}_{i=1}^n$ represent our dataset where:
\begin{itemize}
    \item $X_i \in \mathbb{R}^p$ is a vector of covariates for subject $i$
    \item $W_i \in \{0,1\}$ is the treatment indicator
    \item $T_i$ is the observed survival time (either event time or censored time)
    \item $\delta_i$ is the event indicator (1 if event observed, 0 if censored)
\end{itemize}

The causal survival forest method is a powerful tool for estimating average and subgroup-specific treatment effects for survival outcomes, but it estimates the effects only at specific discrete times after treatment. This fails to capture the continuous temporal evolution of treatment responses, particularly in contexts like radiation therapy and chemotherapy where biological effects can substantially rise and fall over time.

\subsection{Causal Machine Learning Framework}

Our approach uses a CML framework to isolate treatment effects beyond traditional correlational methods. While conventional machine learning identifies correlations between variables, CML allows us to understand the causal impact of interventions \citep{added1}. This distinction is fundamental to our study: our goal is not just to predict outcomes but to dissect how treatments shape survival outcomes across patient subgroups.

Given the observational non-randomized nature of our clinical data, we rely on the following assumptions:
\begin{itemize}
    \item \textbf{Unconfoundedness}: Treatment assignment is independent of potential outcomes conditional on observed covariates (also called ignorability or no unmeasured confounding)
    \item \textbf{Positivity (Overlap)}: Every subject has a non-zero probability of receiving each treatment 
    \item \textbf{Consistency}: A subject's observed outcome under their received treatment equals their potential outcome for that treatment
    \item \textbf{Non-interference}: One subject's treatment does not affect another subject's outcome
\end{itemize}

To address selection bias in observational data, we performed propensity score modeling using elastic net logistic regression: $\hat{e}(X) = P(W=1|X)$ with hyperparameters optimized through 10-fold cross-validation. Patients with extreme propensity scores (outside $[0.10, 0.90]$) are trimmed to ensure overlap between treatment groups. See Appendix C.1 for balance diagnostics.

\subsection{CAST: Causal Analysis for Survival Trajectories}

The theoretical foundation of CAST rests on modeling the effect trajectory as a function of time. Our target estimand is the conditional average treatment effect (CATE) at time $t$, given covariates $X$:

\begin{equation}
\tau(x, t) = \mathbb{E}[Y(1,t) - Y(0,t) \mid X = x]
\end{equation}

where $Y(w,t)$ represents the potential outcome at time $t$ under treatment $w$, and $x$ denotes an individual’s covariates. We consider two types of time-varying estimands: the difference in restricted mean survival time (RMST) and the difference in survival probability (SP) between treatment groups. Unlike prior methods that estimate effects at fixed time points, CAST models treatment effects as smooth functions of time. We use a smoothing spline to estimate the continuous effect trajectory, and a quadratic fit to derive interpretable metrics.

\subsubsection{Parametric Modeling Component}

Our parametric modeling component employs a quadratic function: $\tau(t) = \beta_0 + \beta_1 t + \beta_2 t^2$ to capture the rise and fall of treatment effects. The parameters are estimated using weighted least squares:

\begin{equation}
\min_{\beta_0,\beta_1,\beta_2} \sum_t w(t)(\hat{\tau}(t) - (\beta_0 + \beta_1 t + \beta_2 t^2))^2
\end{equation}

where $w(t) = 1/\sigma^2(t)$ are weights based on the variance of the effect estimates at each timepoint. This approach yields clinically interpretable parameters, including the peak effect time ($t_{\text{peak}} = -\beta_1 / 2\beta_2$), the maximum effect magnitude ($\tau(t_{\text{peak}})$), and the treatment effect half-life, defined as the time it takes for the effect to diminish by 50\% from its peak.

These parameters directly quantify key clinical aspects of the treatment response: when the maximum benefit occurs, how large that benefit is, and how quickly it diminishes—information critical for clinical decision-making that traditional methods cannot provide. See Appendix C.3 for fitted coefficients and summary statistics from the parametric model.\\

\begin{wrapfigure}{l}{0.588\textwidth}
\vspace{-2.5em}
\begin{minipage}{0.58\textwidth}
\begin{algorithm}[H]
\caption{{\sc CAST-Parametric}}
\label{algo:cast_parametric}
\begin{algorithmic}[1]
\State \textbf{Input:} Horizons $\mathcal{H}$, ATEs $\{\hat{\tau}_h\}$, SEs $\{\hat{\sigma}_h\}$
\State \textbf{Output:} Temporal function $\hat{\tau}(t)$, peak time $t^*$, half-life $\lambda$
\State $\mathcal{W} \gets \{w_h = 1/\hat{\sigma}_h^2\}$ \Comment{Inverse-variance weights}
\State $\hat{\tau}(t) \gets \Call{FitQuadraticModel}{\mathcal{H}, \hat{\tau}, \mathcal{W}}$
\State $\beta_1, \beta_2 \gets$ coefficients from fit
\If{$\beta_2 \ne 0$}
    \State $t^* \gets -\beta_1 / (2\beta_2)$ \Comment{Time of peak effect}
    \State $\lambda \gets \Call{Solve}{\hat{\tau}(t^* + \lambda) = \hat{\tau}(t^*)/2}$
\Else
    \State $t^*, \lambda \gets \text{NA}$ \Comment{Degenerate case}
\EndIf
\State \Return $\hat{\tau}(t), t^*, \lambda$
\end{algorithmic}
\end{algorithm}
\end{minipage}
\vspace{-2em}
\end{wrapfigure} 

\textbf{CAST-Parametric:} This algorithm models treatment effects over time using a weighted quadratic fit to the estimated ATEs across discrete horizons. Inverse-variance weighting emphasizes more confident estimates. The peak effect time is derived analytically, while the half-life is computed by numerically solving for the point where the curve falls to half its maximum. This approach yields interpretable summaries of treatment dynamics, aligning with radiobiological phenomena such as delayed benefit and diminishing returns. \\

\subsubsection{Non-parametric Modeling Component}

Our non-parametric component employs cross-validated smoothing splines:

\begin{equation}
\tau(t) = g(t), \quad \textnormal{where } \quad g = \arg\min_f \left\{ \sum_t w(t)\left(\hat{\tau}(t) - f(t)\right)^2 + \lambda \int f''(t)^2 \, dt \right\}
\end{equation}

where $\lambda$ is selected via cross-validation. This approach adapts to the data without imposing a predetermined functional form, revealing subtle inflection points in the effect trajectory that correspond to biological phase transitions in the treatment response. 

We calculate the first and second derivatives of the fitted spline to identify key features of the treatment effect trajectory: local maxima and minima where $g'(t) = 0$, acceleration and deceleration phases based on sign changes in $g''(t)$, and inflection points where $g''(t) = 0$.

The non-parametric model complements the parametric fit by capturing complex, less predictable patterns—especially during later follow-up periods, when biological processes like accelerated repopulation and late toxicities may cause deviations from the smooth quadratic trend. 

\vspace{0.2cm}

\begin{wrapfigure}{l}{0.58\textwidth}
\vspace{-2.5em}
\begin{minipage}{0.545\textwidth}
\begin{algorithm}[H]
\caption{{\sc CAST-Nonparametric}}
\label{algo:cast_nonparametric}
\begin{algorithmic}[1]
\State \textbf{Input:} Horizons $\mathcal{H}$, ATEs $\{\hat{\tau}_h\}$, SEs $\{\hat{\sigma}_h\}$
\State \textbf{Output:} Spline $\hat{\tau}(t)$, peak $t^*$, inflections $\{t_i\}$
\State $\mathcal{W} \gets \{w_h = 1/\hat{\sigma}_h^2\}$
\State $\hat{\tau}(t) \gets \Call{FitSpline}{\mathcal{H}, \hat{\tau}, \mathcal{W}}$
\State $D_1(t), D_2(t) \gets$ first and second derivatives of $\hat{\tau}(t)$
\State $t^* \gets \Call{Argmax}{\hat{\tau}(t)}$ \Comment{Peak effect}
\State $\{t_i\} \gets \Call{ZeroCrossings}{D_2(t)}$ \Comment{Inflection points}
\If{$t^*$ not in $[\min(\mathcal{H}), \max(\mathcal{H})]$}
    \State $t^* \gets$ NA
\EndIf
\State \Return $\hat{\tau}(t), t^*, \{t_i\}$
\end{algorithmic}
\end{algorithm}
\end{minipage}
\vspace{-2em}
\end{wrapfigure}

\noindent
\textbf{CAST-Nonparametric:} This algorithm fits a smoothing spline to the estimated treatment effects across time using inverse-variance weights. It computes the first and second derivatives of the spline to identify key dynamics: the peak effect time via the curve's global maximum and biological phase transitions via inflection points. This method captures delayed and non-monotonic effect trajectories often missed by parametric models, reflecting immune response, tissue adaptation, or timing heterogeneity. \\

CAST-Parametric and CAST-Nonparametric offer complementary modeling capabilities. The parametric method provides interpretable summary statistics such as peak effect timing and half-life, which are clinically intuitive and useful for hypothesis testing under smooth treatment dynamics. In contrast, the spline-based approach relaxes these assumptions and flexibly captures nonlinear, delayed, or multi-phase effects. Together, these models allow us to evaluate the robustness of temporal patterns and support a wide range of clinical interpretations.

\textbf{Theoretical Guarantees:} See Appendix A for theorem statements establishing consistency of CAST estimators and identifiability of time-varying treatment effects under standard causal assumptions.

\section{Experiments}

\textbf{Dataset:} We use the RADCURE observational dataset from The Cancer Imaging Archive (TCIA), a publicly accessible resource on multiple types of cancer. The dataset spans from 2005 to 2017 and contains clinical, demographic, and treatment metadata for 3,346 patients. We select 2,651 patients with pathologically confirmed HNSCC and a defined tumor site. While the dataset primarily focuses on oropharyngeal cancer, it also includes laryngeal, nasopharyngeal, and hypopharyngeal cases. The binary treatment variable used in CAST is chemotherapy (yes/no) with radiotherapy covariates. 

\textbf{Preprocessing:} We filtered incomplete profiles and standardized continuous variables for comparability. We used radiotherapy data—dose/fraction, number of fractions, and total radiation treatment time duration in days—to calculate Biologically Effective Dose (BED) values, applying both dose-independent (DI) and dose-dependent (DD) models with established radiobiological parameters \citep{igor3}. We then partitioned the dataset into training (75\%) and testing (25\%) sets, maintaining consistent event rates across both subsets for unbiased evaluation of treatment effects. See Appendix B for more on data preprocessing and computing resources.

\subsection{Propensity Score Modeling} 

To address selection bias, we used elastic net logistic regression to estimate the likelihood of a person receiving treatment, based on their characteristics. Hyperparameters were optimized through 10-fold cross-validation: elastic net mixing parameter $\alpha \in [0.01, 0.99]$ and regularization parameter $\lambda$ chosen from a grid of 100 values. Propensity score distributions were assessed through both Pearson and Spearman correlation matrices ($\alpha=0.05$, Bonferroni-corrected) and visualized using kernel density estimation. Patients with scores outside $[0.10, 0.90]$ were trimmed to ensure overlap, with sensitivity analyses conducted at thresholds $\{0.01, 0.03, 0.05, 0.07, 0.10\}$. 

\subsection{Implementation \& Heterogeneity Analysis} 

We used causal survival forests with Nelson-Aalen estimation to handle right-censoring, estimating treatment effects over ${12, 24, \ldots, 120}$ months post-treatment. Our forest was constructed with 5,000 trees to ensure robust estimation of heterogeneous effects across the patient population. Sensitivity analyses using different numbers of trees showed similar results.

For each time horizon, we independently trained a causal forest model using the training dataset, with covariates properly standardized and propensity scores incorporated through doubly-robust estimation. The forests were configured with tuning parameters selected through cross-validation, including minimum node size, split regularization, and sampling fraction. Prediction uncertainty was quantified through the infinitesimal jackknife method, providing variance estimates for each individual treatment effect. This approach allowed us to capture both average treatment effects and their heterogeneity across different patient subgroups at each follow-up time point, while properly accounting for the right-censoring inherent in survival data \citep{added3, added4}.

Treatment effect heterogeneity was analyzed using approximate SHAP values calculated via Monte Carlo sampling with 1,000 iterations and a convergence threshold of $\epsilon = 0.01$. The SHAP values were normalized such that $\sum_i \text{SHAP}_i$ corresponds to the difference between the individual and mean model predictions. This approach revealed which patient characteristics most strongly influenced treatment response, with HPV status and smoking history emerging as particularly important predictors. We visualized the relationship between feature values and their SHAP contributions to identify subgroups with differential treatment benefits.

\subsection{Validation Methods} 

We implemented several validation strategies as refutation tests for the causal effect estimates in our experiments. For each test, we computed summary statistics (mean, standard deviation, max deviation) to assess model robustness, using a consistent 5,000-tree specification and random seeds for reproducibility. 

\textbf{Dummy Outcome Tests:} We shuffled treatment assignments and outcome times across 20 repetitions for each time horizon (12-120 months), generating a null distribution to assess false positive rates. Boxplots confirmed the null hypothesis centered around zero, showing that the causal effect estimates for each horizon were centered around zero as expected. The variance of these estimates increased with increasing horizon time due to the decreasing number of patients remaining at risk at longer times. The results suggested good reliability of the estimates for times $\leq 60$ months.

\textbf{Sensitivity to Additional Covariates:} We introduced synthetic covariates with varying signal strengths of correlation with treatment assignment (0.1, 0.3, 0.5) that were unrelated to both treatment assignment and outcome, in order to assess the sensitivity of treatment effect estimates to irrelevant/spurious variables.

\textbf{Negative Control Tests:} Irrelevant binary treatments were randomly assigned to ensure the model did not detect spurious effects. Treatment effects for these were zero across all time horizons.

\textbf{Robustness to Irrelevant Features:} Five random noise variables were added, and changes in treatment effect estimates and feature importance were monitored to ensure no significant impact.

\section{Results}

We present empirical results of CAST on the RADCURE dataset, focusing on time-varying treatment effects, patient-level heterogeneity, and robustness validation. As shown in Figure 2 below, CAST reveals a non-monotonic trajectory in chemotherapy benefit: survival gains increase early post-treatment, plateau in the mid-term, and gradually decline thereafter. Both the parametric and non-parametric models suggest a peak in benefit between 50 and 65 months, though the effect trajectory remains relatively stable during this period. These trends indicate that chemotherapy is most impactful in the first few years post-treatment, with gradual tapering over time. This is potentially due to recurrence, long-term toxicity, or competing risks. On the testing set, chemotherapy increased survival probability by $15.2 \pm 6.0\%$ at 3 years and $15.0 \pm 6.7\%$ at 5 years, with RMST gains of $3.6 \pm 1.4$ and $7.1 \pm 2.6$ months, respectively.

\textbf{Individualized effect distributions:} Treatment effect estimates showed notable variation across patients. While most individuals experienced positive effects, CAST identified a long right tail of high responders and a small subset with near-zero or negative effects. However, some of this variation may reflect unmeasured confounding or estimation noise rather than true heterogeneity. These patterns highlight the potential for personalized models in survival-based decision-making.

\vspace{1em}

\begin{figure}[ht]
\setlength{\belowcaptionskip}{-10pt}
    \centering
    \includegraphics[width=0.82\textwidth]{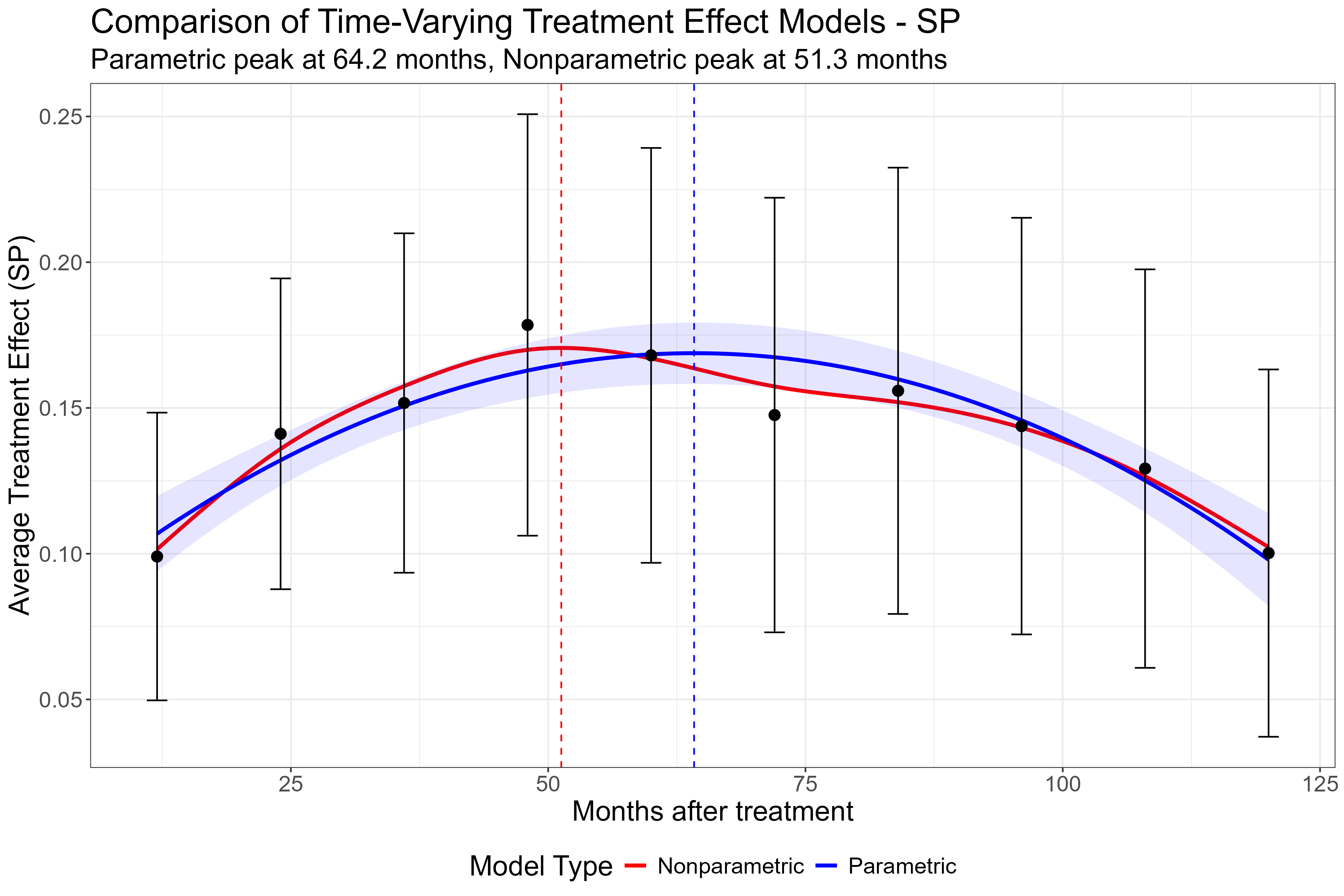}
    \caption{Comparison of time-varying treatment effect models using CAST. The red curve shows the parametric estimate with 95\% CIs; the blue curve shows the non-parametric spline. Black dots denote average treatment effects $\pm$ standard errors on the survival probability scale.}
\end{figure}

\vspace{1.5em}

\textbf{Subgroup variation:} To identify drivers of treatment heterogeneity, we computed Pearson and Spearman correlation matrices between clinical covariates, SHAP values, and estimated treatment effects (Figure 3a,b). Pearson captures linear relationships, while Spearman reflects monotonic trends, offering complementary views of variable influence. Smoking pack-years showed the strongest and most consistent negative correlation across both matrices, reinforcing its role in reducing chemotherapy benefit. HPV positivity and younger age also exhibited modest positive correlations with SHAP values and effect estimates, aligning with known clinical patterns. Additional SHAP visualizations and discussion are provided in Appendix C.2.

\textbf{Robustness \& Effect Heterogeneity:} CAST passed multiple validation checks, including dummy outcome tests, synthetic confounder experiments, and trimming sensitivity analyses. For the synthetic confounder tests, only the highest strengths of correlation with treatment assignment distorted the causal effect estimates dramatically, whereas smaller strengths had minimal impact. In the robustness checks with irrelevant features, as expected, the noise variables were largely ignored by the CSF and did not substantially affect the causal effect estimates. Individualized treatment effect estimates exhibited a long right tail of high responders and a subset with near-zero or negative benefit. While some of this variation may reflect noise, the observed patterns indicate potential for personalized treatment modeling. Additional visualizations are provided in Appendix C.4.

\vspace{-1em}

\begin{figure}[htbp]
    \centering
    \begin{subfigure}[t]{0.49\textwidth}
        \centering
        \includegraphics[width=\textwidth]{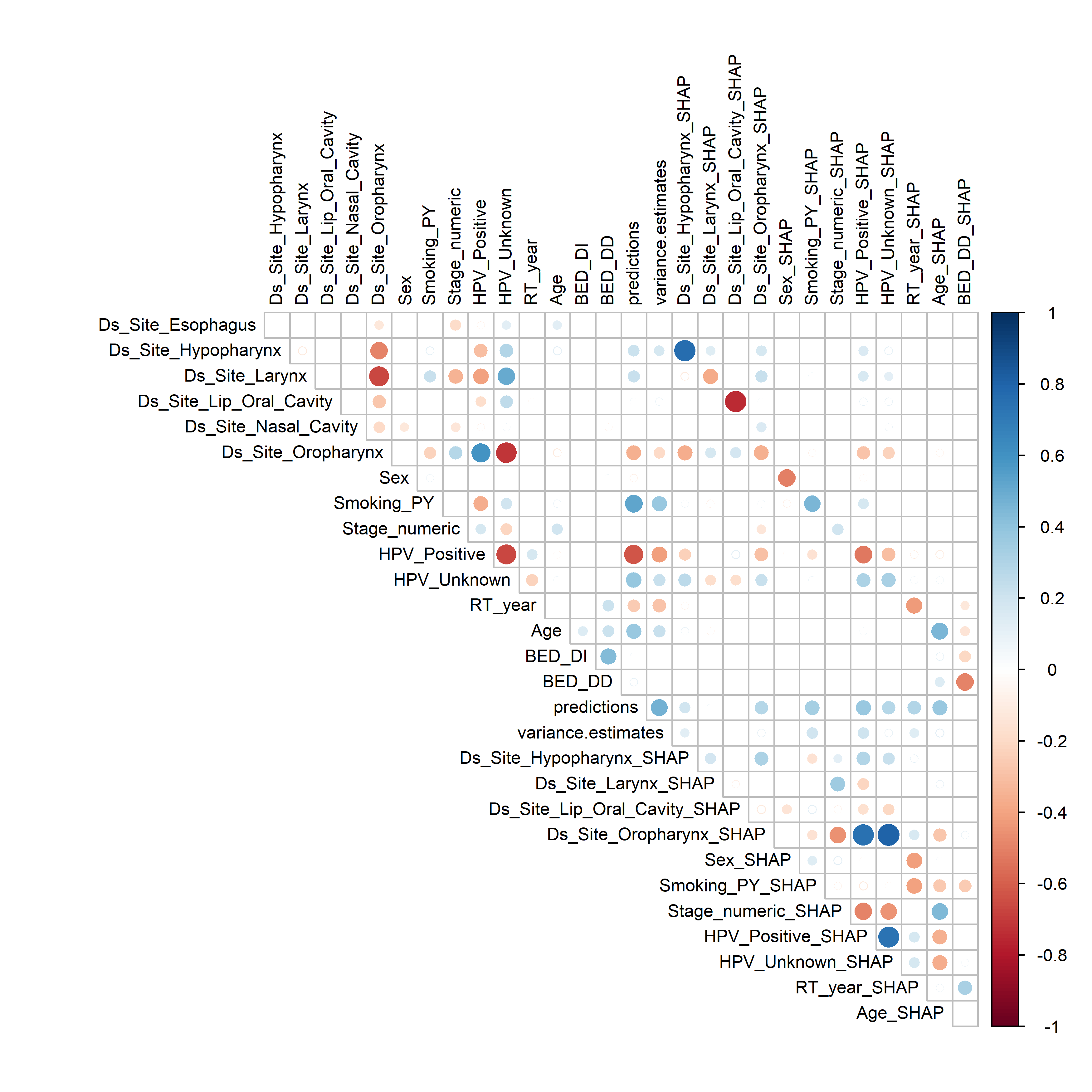}
        \caption{Pearson Correlation}
    \end{subfigure}
    \hfill
    \begin{subfigure}[t]{0.49\textwidth}
        \centering
        \includegraphics[width=\textwidth]{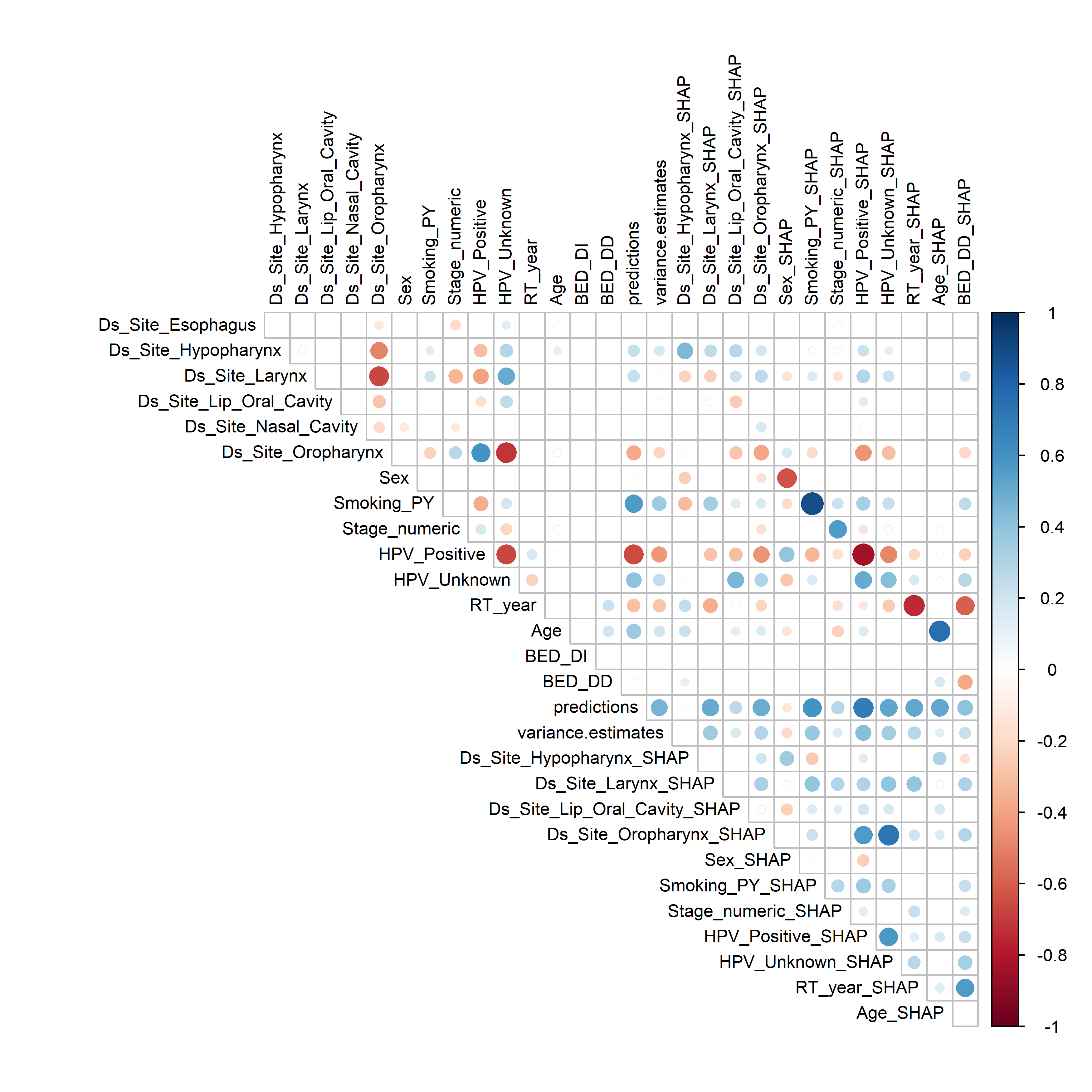}
        \caption{Spearman Correlation}
    \end{subfigure}
    \vspace{0.2cm}
    \caption{Correlation matrices between covariates, SHAP values, and treatment effects}
    
\end{figure}

\vspace{-1em}

\section{Discussion}

The patterns uncovered by CAST have important clinical implications. The observed peak in survival benefit around four to five years post-treatment suggests that chemotherapy is most effective for short to mid-term local control but may not sustain long-term survival. This decline could reflect tumor repopulation, distant progression, or delayed toxicity \citep{igor2}. However, since fewer patients remained at risk (did not experience a death or censoring event) at longer follow-up times, reliability of the causal effect estimates at long times is reduced compared with shorter times, as shown by our dummy tests. 

These findings support the value of adaptive monitoring and adjunct strategies to extend therapeutic benefit. The heterogeneity revealed by CAST emphasizes the need for treatment personalization. Correlation and SHAP-based analysis together identified HPV positivity and smoking as the most influential factors. Favorable outcomes in HPV-positive patients align with known radiosensitivity and impaired DNA repair, while smoking was linked to reduced benefit—consistent with mechanisms like tumor hypoxia and immunosuppression. Age also showed a modest effect, with younger patients generally benefiting more; an inflection point around 50–60 years may be clinically meaningful (Figure 3 and Figure 4 in Appendix C.2). In contrast, tumor site and TNM stage had limited influence on treatment effect heterogeneity, despite their prognostic relevance.

These findings align with efforts to tailor treatment by biologic subgroup. CAST offers a data-driven framework to support such stratifications and generate hypotheses for future trials. Rather than replacing existing tools, it complements them by modeling continuous-time dynamics and revealing patient-level variation. More broadly, this study shows how combining mechanistic modeling with causal machine learning can enhance the analysis of observational data. By embedding radiobiological insight into CAST using BED variants from different tumor repopulation models, we uncover treatment effects that align with known biology while also revealing discrepancies, such as stronger chemotherapy benefits than reported in prior meta-analyses. This offers a powerful way to complement clinical trials and generate new hypotheses.

\subsection*{Limitations and Broader Impacts}

\textbf{Data limitations:} The dataset exhibits substantial right-censoring: while 88.9\% of patients remain in follow-up at one year, only 22.2\% do so by year six. This may bias long-term survival estimates and obscure treatment effects that manifest later in time. 

\textbf{External validity:} The data come from a single institution (University Health Network, Toronto) and are predominantly male (80\%), limiting generalizability to broader populations, especially women. 

\textbf{Causal assumptions:} Like all causal inference methods, CAST relies on the assumption of no unmeasured confounding. Important factors such as diet, lifestyle, or genetic risk—potentially related to both treatment and outcome—are not included. 

\textbf{Methodological scope:} From a machine learning perspective, CAST supports only binary treatment variables. Extending it to model continuous dosing, multi-arm comparisons, or longitudinal interventions remains an important direction for future work.

\section{Conclusion}

In this paper, we present CAST, which is to our knowledge the first framework for modeling how treatment effects change over time using parametric and non-parametric techniques in the context of causal survival analysis with multiple features. CAST extends the utility of causal survival forests from estimating effects at discrete horizon times to continuous-time modeling. Applied to chemotherapy for HNSCC, CAST estimates individualized treatment trajectories and highlights when treatment effects peak and decline. Our results show that CAST is robust and interpretable, offering a general framework for modeling time-varying treatment effects across medical contexts. By isolating the causal influence of patient characteristics and capturing the dynamics of treatment response, CAST supports more personalized and adaptive care. This helps clinicians identify critical windows, tailor interventions to individual risk profiles, and refine strategies as new evidence emerges.


\clearpage

\bibliographystyle{unsrtnat}

\vspace{0.5cm}

\subsection*{Ethics Statement} Existing at the intersection of machine learning (ML), healthcare, and causal inference, our work inevitably raises ethical considerations. By bringing ML methods to oncology research, we strive to advance personalized medicine and treatment strategies. However, our estimates are based on observational data and may be biased by unmeasured confounding. While the dataset includes a comprehensive description of variables including age, sex, smoking history, and HPV status, it omits race, ethnicity, and socioeconomic status data. These factors are key to understanding structural barriers to healthcare that could possibly affect outcomes. This risks amplifying existing biases in the data. ML models in oncology must be used cautiously and should not replace clinical judgment, but rather act as a supplement. Our findings require further clinical validation before integration into decision-making workflows.

\clearpage

\section*{A \quad Theoretical Justification of CAST}

We provide formal justification for the consistency and identifiability of the time-varying treatment effect estimator $\hat{\tau}(t)$ used in the CAST framework.

\vspace{0.3cm}

\subsection*{A.1 Problem Setting}

Let $\mathcal{D} = \{(X_i, W_i, T_i, \delta_i)\}_{i=1}^n$ be a dataset of $n$ i.i.d. samples where:
- $X_i \in \mathbb{R}^p$ is a vector of observed covariates,
- $W_i \in \{0, 1\}$ is a binary treatment indicator,
- $T_i$ is the observed event or censoring time,
- $\delta_i \in \{0, 1\}$ is the event indicator ($1$ if the event occurred, $0$ if censored).

Let $Y(w, t)$ denote the potential outcome (e.g., survival status at time $t$) under treatment $w \in \{0, 1\}$.

We define the time-varying Conditional Average Treatment Effect (CATE) as:
\[
\tau(x, t) := \mathbb{E}[Y(1, t) - Y(0, t) \mid X = x].
\]

CAST estimates $\tau(x, t)$ using a doubly-robust causal survival forest followed by a spline or quadratic fit across time.

\vspace{0.5cm}

\subsection*{A.2 Assumptions}

We adopt standard causal inference and survival analysis assumptions:

\begin{enumerate}
    \item[(A1)] \textbf{Unconfoundedness:} $(Y(0, t), Y(1, t)) \perp W \mid X$ for all $t$.
    \item[(A2)] \textbf{Positivity:} $0 < P(W = 1 \mid X) < 1$ almost surely.
    \item[(A3)] \textbf{Consistency:} $Y = Y(W, t)$ if $W$ is received.
    \item[(A4)] \textbf{Non-informative Censoring:} $C \perp (Y(0, t), Y(1, t)) \mid X, W$ for censoring time $C$.
    \item[(A5)] \textbf{Consistency of Forest Estimators:} The causal survival forests used yield consistent estimates of conditional survival functions $S_w(t \mid X)$.
\end{enumerate}

\vspace{0.5cm}

\subsection*{A.3 Theorem: Pointwise Consistency of $\hat{\tau}(t)$}

\begin{theorem}[Pointwise Consistency]
Under assumptions (A1)--(A5), for each fixed $t$:
\[
\hat{\tau}(t) := \mathbb{E}_X[\hat{S}_1(t \mid X) - \hat{S}_0(t \mid X)] \xrightarrow{p} \tau(t) := \mathbb{E}_X[S_1(t \mid X) - S_0(t \mid X)]
\]
as $n \to \infty$, where $\hat{S}_w(t \mid X)$ is the estimated conditional survival function under treatment $w$ from causal survival forests.
\end{theorem}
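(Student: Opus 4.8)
The plan is to lean on assumption (A5), which supplies pointwise (in $x$) consistency of the forest's conditional survival estimates, and then to lift this to consistency of the covariate-averaged contrast by a boundedness argument. First I would record the identification that makes (A5) meaningful: under unconfoundedness (A1), consistency (A3), and non-informative censoring (A4), with positivity (A2) ensuring both treatment arms carry mass at every $x$, the counterfactual conditional survival function $S_w(t\mid x)=\mathbb{E}[Y(w,t)\mid X=x]$ coincides with the treatment-stratified, censoring-adjusted conditional survival curve computed from the observed-data law (the Nelson--Aalen / IPCW-adjusted quantity that the forest fits). Thus (A5) can be read as $\hat{S}_w(t\mid x)\xrightarrow{p}S_w(t\mid x)$ for $F_X$-almost every $x$ and each fixed $t$.

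Next I would decompose the estimation error. With $\tau(t)=\mathbb{E}_X[S_1(t\mid X)-S_0(t\mid X)]$ and $\hat\tau(t)=\mathbb{E}_X[\hat S_1(t\mid X)-\hat S_0(t\mid X)]$, linearity of the (fresh) covariate expectation gives
\[
\hat\tau(t)-\tau(t)=\mathbb{E}_X\big[\hat S_1(t\mid X)-S_1(t\mid X)\big]-\mathbb{E}_X\big[\hat S_0(t\mid X)-S_0(t\mid X)\big],
\]
so it suffices to show $\mathbb{E}_X[\hat S_w(t\mid X)-S_w(t\mid X)]\xrightarrow{p}0$ for each $w$. The key leverage is that survival functions lie in $[0,1]$, so the integrand is dominated by $1$ uniformly. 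Taking expectation over the training data and applying Fubini, I would bound $\mathbb{E}_{\mathcal D_n}\big|\mathbb{E}_X[\hat S_w-S_w]\big|\le \int \mathbb{E}_{\mathcal D_n}|\hat S_w(t\mid x)-S_w(t\mid x)|\,dF_X(x)$; the inner expectation tends to $0$ for each $x$ because bounded convergence upgrades the convergence in probability of (A5) to $L^1$ convergence, and a second dominated-convergence step (integrand bounded by $1$) sends the outer integral to $0$. Hence $\mathbb{E}_X[\hat S_w-S_w]\to0$ in $L^1$ and therefore in probability, and combining the two arms yields the claim.

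The main obstacle is less the bounded-convergence bookkeeping than the status of (A5) itself: pointwise consistency of $\hat S_w(t\mid x)$ under right-censoring is the substantive content, and in a fully self-contained argument it would have to be derived from the honesty and subsampling conditions of generalized random forests together with the orthogonalized, doubly-robust scoring of the causal survival forest, rather than assumed. A secondary subtlety arises if $\hat\tau(t)$ is instead read as the empirical average $n^{-1}\sum_i[\hat S_1(t\mid X_i)-\hat S_0(t\mid X_i)]$ over the fitting sample rather than as an integral against $F_X$: there the forest estimate and its evaluation points are dependent, so I would either invoke honesty/cross-fitting to decouple them or add a uniform law-of-large-numbers term controlling $n^{-1}\sum_i\hat\tau(X_i,t)-\mathbb{E}_X[\hat\tau(X,t)]$, both of which again rest on the underlying forest asymptotics.
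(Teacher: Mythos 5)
Your proposal is correct and follows the same skeleton as the paper's proof: everything is made to rest on assumption (A5), and the only work is to pass the pointwise-in-$x$ convergence through the covariate expectation. Where you differ is in how that passage is justified. The paper simply invokes ``the continuous mapping theorem, since subtraction and expectation are continuous'' (plus a remark that trimming bounds the inverse propensity weights); as stated this does not really suffice, because $\mathbb{E}_X[\hat S_w(t\mid X)]$ is not a continuous function of a finite-dimensional convergent quantity but an integral of a data-dependent random function. Your replacement --- using the uniform bound $\hat S_w, S_w \in [0,1]$ to upgrade the pointwise convergence in probability from (A5) to $L^1$ convergence via bounded convergence, then Fubini and a second dominated-convergence step to control the integral over $F_X$ --- is the argument that is actually needed, so your version is more rigorous than the paper's on the one nontrivial step. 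You also flag two issues the paper passes over silently: that (A5) assumes away the substantive content (consistency of the forest's censoring-adjusted survival estimates), and that if $\hat\tau(t)$ is an empirical average over the fitting sample rather than an integral against $F_X$, one needs honesty or cross-fitting to decouple the estimate from its evaluation points. Both caveats are apt and would strengthen the appendix if incorporated.
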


\begin{proof}
This follows from:
1. Consistency of $\hat{S}_w(t \mid X)$ (A5),
2. The continuous mapping theorem, since subtraction and expectation are continuous,
3. Trimming enforces overlap (A2), ensuring bounded inverse propensity weights.
\end{proof}

\vspace{0.5cm}

\subsection*{A.4 Identifiability of $\tau(t)$ from Observational Data}

\begin{theorem}[Identifiability]
Under assumptions (A1)--(A4), the marginal time-varying treatment effect
\[
\tau(t) := \mathbb{E}_X[\mathbb{E}[Y \mid W=1, X, T \ge t] - \mathbb{E}[Y \mid W=0, X, T \ge t]]
\]
is identified from observational data using inverse probability weighting or doubly-robust estimation.
\end{theorem}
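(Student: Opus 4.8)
The plan is to reduce the marginal causal estimand $\tau(t) = \mathbb{E}_X[\mathbb{E}[Y(1,t) - Y(0,t) \mid X]]$ to a functional of the observed-data distribution, and then to exhibit both the inverse-probability-weighted (IPW) and augmented (doubly-robust, AIPW) representations asserted in the statement. Throughout I would treat $Y(w,t) = \mathbf{1}\{T(w) > t\}$ as the survival-status potential outcome, write $e(X) = P(W=1 \mid X)$ for the propensity score, and keep in mind that the only observables are $(X, W, T, \delta)$.

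First I would establish the outcome-regression (g-formula) identification, which is the backbone of the argument. Starting from $\mathbb{E}[Y(w,t) \mid X]$, unconfoundedness (A1) lets me insert the conditioning event $\{W=w\}$ without changing the conditional mean, giving $\mathbb{E}[Y(w,t) \mid X] = \mathbb{E}[Y(w,t) \mid X, W=w]$; consistency (A3) then replaces the potential outcome by the observed one on that event, so that $\mathbb{E}[Y(w,t)\mid X,W=w] = \mathbb{E}[Y \mid X, W=w]$. Positivity (A2) is precisely what guarantees that each conditioning event $\{W=w, X=x\}$ carries positive mass, so these arm-specific conditional expectations are well defined almost surely for both $w=0$ and $w=1$. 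Subtracting the two arms and taking the outer expectation over $X$ recovers exactly the observational functional in the theorem, modulo the risk-set conditioning, which I handle below.

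Next I would convert this to the IPW form via the Horvitz--Thompson identity: under (A1)--(A2),
\[
\mathbb{E}\!\left[\frac{W Y}{e(X)} - \frac{(1-W)Y}{1-e(X)}\right] = \mathbb{E}_X\big[\mathbb{E}[Y\mid X,W=1] - \mathbb{E}[Y\mid X,W=0]\big] = \tau(t),
\]
which follows by iterating expectations and using $\mathbb{E}[W\mid X] = e(X)$. The doubly-robust representation then arises by adding the mean-zero augmentation term, built from the outcome residual weighted by $(W - e(X))/\big(e(X)(1-e(X))\big)$, to the IPW score; I would verify that this augmentation integrates to zero whenever \emph{either} the propensity model or the outcome model is correctly specified, which is the source of the double-robustness guarantee. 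Censoring enters only through the conditioning on the risk set $\{T \ge t\}$: here I would invoke non-informative censoring (A4) to license inverse-probability-of-censoring weighting, replacing $Y$ by $\delta\,Y/\widehat{G}(t\mid X,W)$ with $G$ the conditional censoring survival function, so that restricting to the at-risk set does not distort the arm-specific means.

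I expect the censoring step to be the main obstacle. Unlike the classical cross-sectional identification that rests on (A1), (A2), and (A3) alone, the survival setting forces me to show that conditioning on the post-treatment event $\{T \ge t\}$ does not induce selection bias, and that (A4) is exactly the assumption that validates the IPCW correction. A secondary technical point is verifying the Neyman-orthogonality of the doubly-robust score---i.e.\ that the augmentation term has zero mean under single-nuisance misspecification---but this becomes routine once the IPCW-reweighted data are shown to inherit the same unconfoundedness structure as complete data.
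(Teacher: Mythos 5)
Your proposal is correct and follows the same basic route the paper takes: identify $\mathbb{E}[Y(w,t)\mid X]$ from the observed arm-specific conditional means via unconfoundedness, consistency, and positivity, then difference and average over $X$, with non-informative censoring handling the survival aspect. The difference is one of completeness rather than strategy: the paper's proof is a two-sentence assertion that the conditional means ``can be consistently estimated,'' whereas you actually write out the chain of equalities $\mathbb{E}[Y(w,t)\mid X]=\mathbb{E}[Y(w,t)\mid X, W=w]=\mathbb{E}[Y\mid X,W=w]$, derive the Horvitz--Thompson/IPW representation, and sketch the augmentation term and its double-robustness --- all of which the theorem statement explicitly invokes but the paper's proof never addresses. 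You also correctly flag the one genuinely delicate point that the paper glosses over entirely: the estimand conditions on the post-treatment risk set $\{T\ge t\}$, and it is assumption (A4) via IPCW that prevents this conditioning from inducing selection bias. Your treatment of that step is a sketch rather than a full argument, but identifying it as the load-bearing use of (A4) is already more than the paper provides.
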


\begin{proof}
Under unconfoundedness and non-informative censoring, we can consistently estimate the conditional means $\mathbb{E}[Y(w, t) \mid X]$ from observed data. The difference in conditional expectations across treatment groups yields an identifiable estimator of $\tau(t)$.
\end{proof}

\vspace{0.5cm}

\subsection*{A.5 Estimability of Peak Effect Time in CAST-Parametric}

Let the parametric effect trajectory be:
\[
\tau(t) = \beta_0 + \beta_1 t + \beta_2 t^2,
\]
and suppose $\hat{\beta}_1, \hat{\beta}_2$ are estimated using weighted least squares.

\begin{theorem}[Consistency of Estimated Peak Time]
If $\hat{\beta}_1 \xrightarrow{p} \beta_1$, $\hat{\beta}_2 \xrightarrow{p} \beta_2$ with $\beta_2 < 0$, then the estimated peak time
\[
\hat{t}^* = -\frac{\hat{\beta}_1}{2 \hat{\beta}_2}
\]
is a consistent estimator of the true peak $t^* = -\frac{\beta_1}{2 \beta_2}$.
\end{theorem}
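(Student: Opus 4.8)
The plan is to recognize $\hat{t}^*$ as a fixed, smooth transformation of the estimated coefficient vector and then invoke the continuous mapping theorem. Define $g : \mathbb{R} \times (\mathbb{R} \setminus \{0\}) \to \mathbb{R}$ by $g(b_1, b_2) = -b_1/(2 b_2)$, so that $\hat{t}^* = g(\hat{\beta}_1, \hat{\beta}_2)$ and $t^* = g(\beta_1, \beta_2)$. The map $g$ is continuous at every point whose second argument is nonzero, and since the hypothesis stipulates $\beta_2 < 0$, the target point $(\beta_1, \beta_2)$ lies strictly inside the region where $g$ is continuous. The remaining work is to (i) upgrade the two given marginal convergences to joint convergence of the vector $(\hat{\beta}_1, \hat{\beta}_2)$, and (ii) ensure the estimator does not fall into the singular set on which $g$ is undefined.

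First I would assemble joint convergence. Because both limits $\beta_1$ and $\beta_2$ are deterministic constants, marginal convergence in probability of each coordinate is equivalent to joint convergence in probability of the pair: for any $\varepsilon > 0$, the event $\{\|(\hat{\beta}_1, \hat{\beta}_2) - (\beta_1, \beta_2)\| > \varepsilon\}$ is contained in the union of the two coordinate-wise deviation events, each of vanishing probability, so $(\hat{\beta}_1, \hat{\beta}_2) \xrightarrow{p} (\beta_1, \beta_2)$. Next I would handle the denominator: since $\hat{\beta}_2 \xrightarrow{p} \beta_2 < 0$, taking $\eta = |\beta_2|/2 > 0$ gives $P(\hat{\beta}_2 < \beta_2/2) \to 1$, so with probability tending to one $\hat{\beta}_2$ is bounded away from zero (indeed negative) and $\hat{t}^*$ is well defined. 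Restricting to this event of probability approaching one, $g$ is continuous at the limit point, and the continuous mapping theorem yields $\hat{t}^* = g(\hat{\beta}_1, \hat{\beta}_2) \xrightarrow{p} g(\beta_1, \beta_2) = t^*$.

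The only genuine obstacle is the singularity of the ratio at $b_2 = 0$: one cannot apply the continuous mapping theorem blindly, since $g$ is not continuous (indeed not defined) on all of $\mathbb{R}^2$. This is exactly why the assumption $\beta_2 < 0$ matters. It is not merely a sign convention guaranteeing a maximum rather than a minimum, but the condition that pins the limit strictly inside the continuity region and lets us discard the bad event $\{\hat{\beta}_2 \approx 0\}$ asymptotically. Everything else is routine. If a convergence rate or limiting distribution were wanted instead of mere consistency, the natural continuation would be the delta method applied to $g$, whose gradient $\nabla g = \bigl(-1/(2\beta_2),\, \beta_1/(2\beta_2^2)\bigr)$ is finite precisely because $\beta_2 \neq 0$; this would require the joint asymptotic normality of $(\hat{\beta}_1, \hat{\beta}_2)$ supplied by the weighted least squares fit.
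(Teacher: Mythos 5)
Your proposal is correct and follows essentially the same route as the paper, which also invokes the continuous mapping (Slutsky) theorem for the map $(b_1,b_2)\mapsto -b_1/(2b_2)$ at a limit point with $b_2\neq 0$. You supply the details the paper elides --- upgrading marginal to joint convergence via constant limits, and showing $\hat{\beta}_2$ is bounded away from zero with probability tending to one --- and you correctly note that only $\beta_2\neq 0$ is needed, whereas the paper's phrasing imprecisely asks that both limits be non-zero.
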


\begin{proof}
This follows from Slutsky’s theorem. Since both $\hat{\beta}_1$ and $\hat{\beta}_2$ converge in probability to non-zero limits, and the mapping $f(a,b) = -a/(2b)$ is continuous for $b \ne 0$, it follows that:
\[
\hat{t}^* = -\frac{\hat{\beta}_1}{2 \hat{\beta}_2} \xrightarrow{p} -\frac{\beta_1}{2 \beta_2} = t^*.
\]
\end{proof}

\clearpage

\section*{B \quad Expanded Dataset Subsection}

\textbf{Overview}

Our analysis uses the RADCURE dataset from The Cancer Imaging Archive (TCIA), the largest to our knowledge publicly accessible head and neck cancer imaging dataset. The data spans from 2005 to 2017 and includes computed tomography (CT) images for 3,346 patients, from which we selected a subset of 2,651 patients after filtering for only HNSCC cases. These images are linked to clinical, demographic, and treatment metadata. Following standardized clinical imaging protocols, the RADCURE project includes CT images, pictured alongside manually-reviewed contours differentiating between the planning tumor volume (PTV) and the organs at risk (OARs). All patients in this dataset received radiotherapy, and some received chemotherapy.

The clinical data accounts for patient demographics, including age, gender, and HPV status. It also details tumor staging using the 7th edition TNM system to describe the cancer, in addition to treatment information. While the dataset primarily focuses on oropharyngeal cancer, it also covers laryngeal, nasopharyngeal, and hypopharyngeal cancers. \\

\textbf{Data Preprocessing}

In the preprocessing stage, we filtered out incomplete patient profiles to ensure the dataset included relevant variables and appropriately represented potential confounders. We standardized all continuous variables to have zero mean and unit variance to ensure comparability and optimize model performance. The dataset comprehensively describes treatment details---dose/fraction, number of fractions, and total days of radiotherapy---which we used to calculate Biologically Effective Dose (BED) values. We implemented both dose-independent (DI) and dose-dependent (DD) BED models to capture the biological effects of radiation therapy, using established radiobiological parameters ($\alpha = 0.2$ Gy$^{-1}$, $\alpha/\beta = 10$ Gy, accelerated repopulation rates and onset times). This allowed us to quantify the effective radiation dose accounting for different fractionation schedules. We employed a stratified data partitioning strategy, creating training (75\%) and testing (25\%) sets while maintaining consistent event rates across partitions. Both subsets contained similar proportions of survival events, allowing for unbiased evaluation of treatment effects.

Table 1 summarizes the estimated average treatment effects across time for both restricted mean survival time (RMST) and survival probability (SP) metrics. These values were computed using causal survival forests on held-out test data. We observe that the estimated effects generally increase with longer follow-up, particularly under the RMST metric, reflecting the accumulating benefit of treatment over time. Standard errors are included to reflect model uncertainty at each horizon.

\begin{table}[h!]
\centering
\caption{Summary statistics of the simulated dataset}
\vspace{0.5em}
\begin{tabular}{lcc}
\toprule
\textbf{Statistic} & \textbf{Control Group} & \textbf{Treated Group} \\
\midrule
Event Rate (\%)                & \multicolumn{2}{c}{79.8} \\
Treatment Rate (\%)           & \multicolumn{2}{c}{44.9} \\
Median Survival (months)      & 17.0 & 24.0 \\
\midrule
12-month Survival (\%)        & 70.3 & 90.1 \\
24-month Survival (\%)        & 20.2 & 45.5 \\
36-month Survival (\%)        & 1.9  & 7.3 \\
48-month Survival (\%)        & 0.0  & 0.1 \\
\midrule
Age (mean)                    & 60.42 & 59.23 \\
TNM Stage (mean)              & 1.73  & 3.46 \\
HPV Positivity Rate           & 0.68  & 0.51 \\
Sex (Male = 1)                & 0.48  & 0.49 \\
\bottomrule
\end{tabular}
\label{tab:sim_stats}
\end{table}

\textbf{Computing Resources:} All experiments were conducted with a 13th Gen Intel Core i7-1355U CPU, 16GB RAM, and integrated Intel Iris Xe Graphics. No discrete GPU or cloud resources were used, though such resources would significantly reduce runtime for large-scale extensions of this work.

\clearpage

\section*{C \quad Additional Results}

In this section, we present additional results that extend and validate the findings reported in the main paper. These include visualizations of treatment effect heterogeneity across time, a summary of average treatment effects, and robustness checks to support the reliability of our causal estimates.

\vspace{0.5cm}

\subsection*{C.1 \quad Summary Table of Average Treatment Effects}

Table 2 summarizes the estimated average treatment effects across time horizons using both RMST and survival probability metrics. These values were computed using causal survival forests on the held-out test set. The treatment effects tend to increase over time under both metrics, with RMST showing a steeper upward trend reflecting cumulative benefit. Standard errors are included for each estimate. The early rise in both SP and RMST suggests initial treatment efficacy, while the plateauing in later months reflects diminishing returns, possibly due to recurrence or late toxicity. The RMST gains—peaking at over 16 months—highlight how cumulative survival benefit continues to accrue even as survival probability differences taper off. These patterns support the biological intuition that treatment effects rise quickly post-intervention and then gradually attenuate. 

\vspace{0.5cm}

\begin{table}[ht]
\centering
\caption{Estimated average treatment effects (ATE) across time using RMST and survival probability (SP). SE represent standard errors}
\vspace{1.25em}
\resizebox{0.8\textwidth}{!}{  
\begin{tabular}{c|cc|cc}
\toprule
\textbf{Months} & \textbf{ATE (SP)} & \textbf{SE (SP)} & \textbf{ATE (RMST)} & \textbf{SE (RMST)} \\
\midrule
12  & 0.099 & 0.049 & 0.44  & 0.26 \\
24  & 0.141 & 0.053 & 1.88  & 0.80 \\
36  & 0.152 & 0.058 & 3.58  & 1.46 \\
48  & 0.178 & 0.072 & 5.80  & 2.31 \\
60  & 0.168 & 0.071 & 7.39  & 2.73 \\
72  & 0.148 & 0.075 & 8.38  & 3.52 \\
84  & 0.156 & 0.077 & 11.08 & 4.76 \\
96  & 0.143 & 0.071 & 13.89 & 5.90 \\
108 & 0.129 & 0.068 & 14.76 & 6.16 \\
120 & 0.100 & 0.063 & 16.11 & 6.92 \\
\bottomrule
\end{tabular}
}
\end{table} 

\vspace{0.4cm}

These summary statistics also inform the CAST modeling strategies described in Section 3.3. The steady increase followed by tapering motivates the use of both quadratic and spline-based approaches to flexibly capture the full temporal arc of treatment efficacy.

\clearpage

\subsection*{C.2 \quad SHAP-Based Interpretability Analysis}

While SHAP provides valuable insights into feature influence, the estimates generated here using the fastshap R package are approximate and may be noisy, particularly in the context of survival analysis. We calculated approximate SHAP values because an exact SHAP explainer does not yet exist for the causal survival forest model. Figures 4(a–c) show SHAP plots for the three most influential variables—age, HPV status, and smoking pack-years—highlighting clear heterogeneity in treatment benefit across subgroups. Additional SHAP plots for other covariates—such as tumor site, treatment timing, dose metrics, and TNM stage—are also provided below. These variables had smaller contributions to the model, but are shown for completeness and transparency.

\vspace{0.75cm}

\begin{figure}[ht]
    \centering
    \begin{subfigure}[t]{0.43\textwidth}
        \centering
        \includegraphics[width=\textwidth]{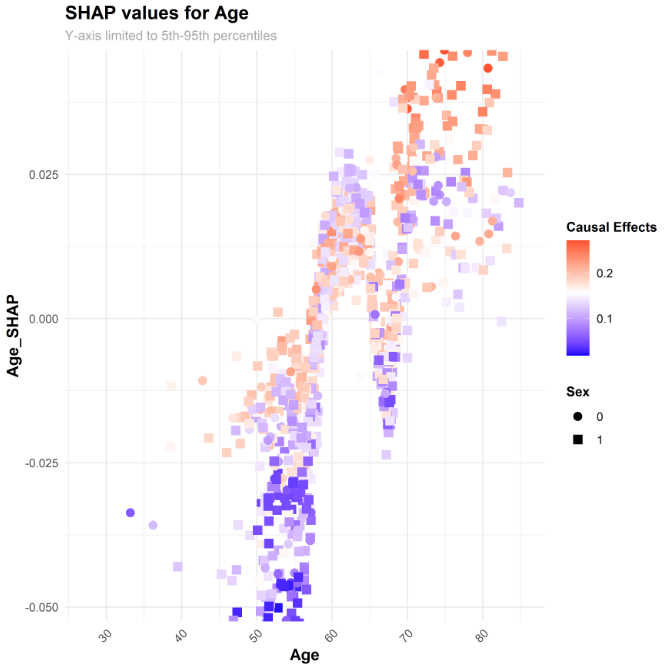}
        \caption{Age}
    \end{subfigure}
    \hfill
    \begin{subfigure}[t]{0.43\textwidth}
        \centering
        \includegraphics[width=\textwidth]{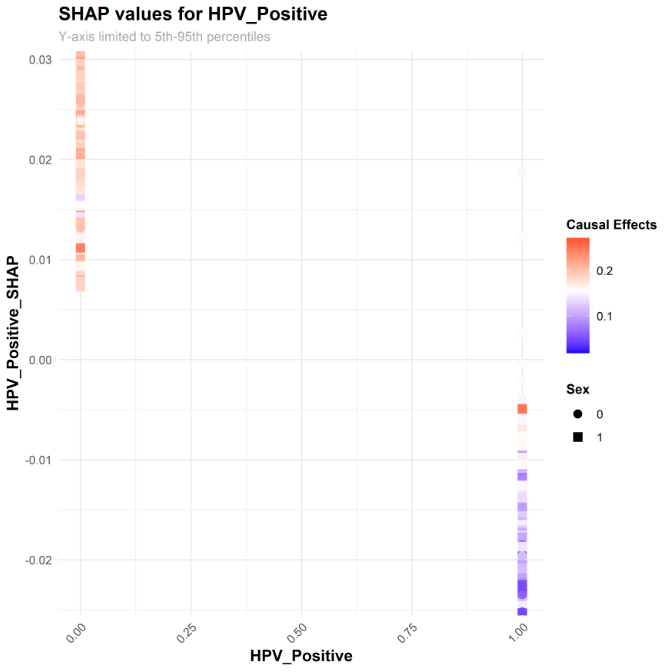}
        \caption{HPV Status}
    \end{subfigure}
    \hfill
    \vspace{1em}
    \begin{subfigure}[t]{0.43\textwidth}
        \centering
        \includegraphics[width=\textwidth]{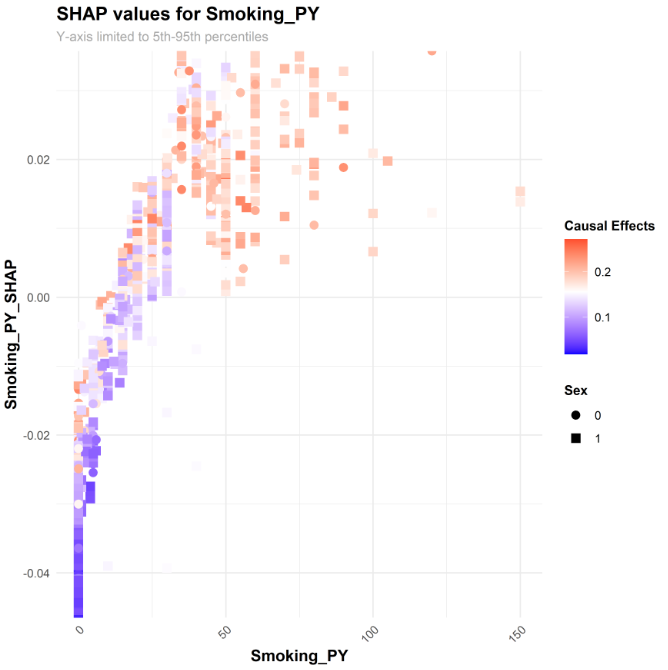}
        \caption{Smoking Pack-Years}
    \end{subfigure}
    
    \vspace{0.75em}
    \caption{SHAP analysis of covariates driving treatment effect heterogeneity. (a) Older age is linked to greater chemotherapy benefit. (b) HPV-negative patients consistently show higher contributions. (c) Smoking history is positively associated with the chemotherapy benefit treatment.}
    
\end{figure}

\clearpage

\begin{figure}[ht]
    \centering
    \begin{subfigure}[t]{0.45\textwidth}
        \centering
        \includegraphics[width=\textwidth]{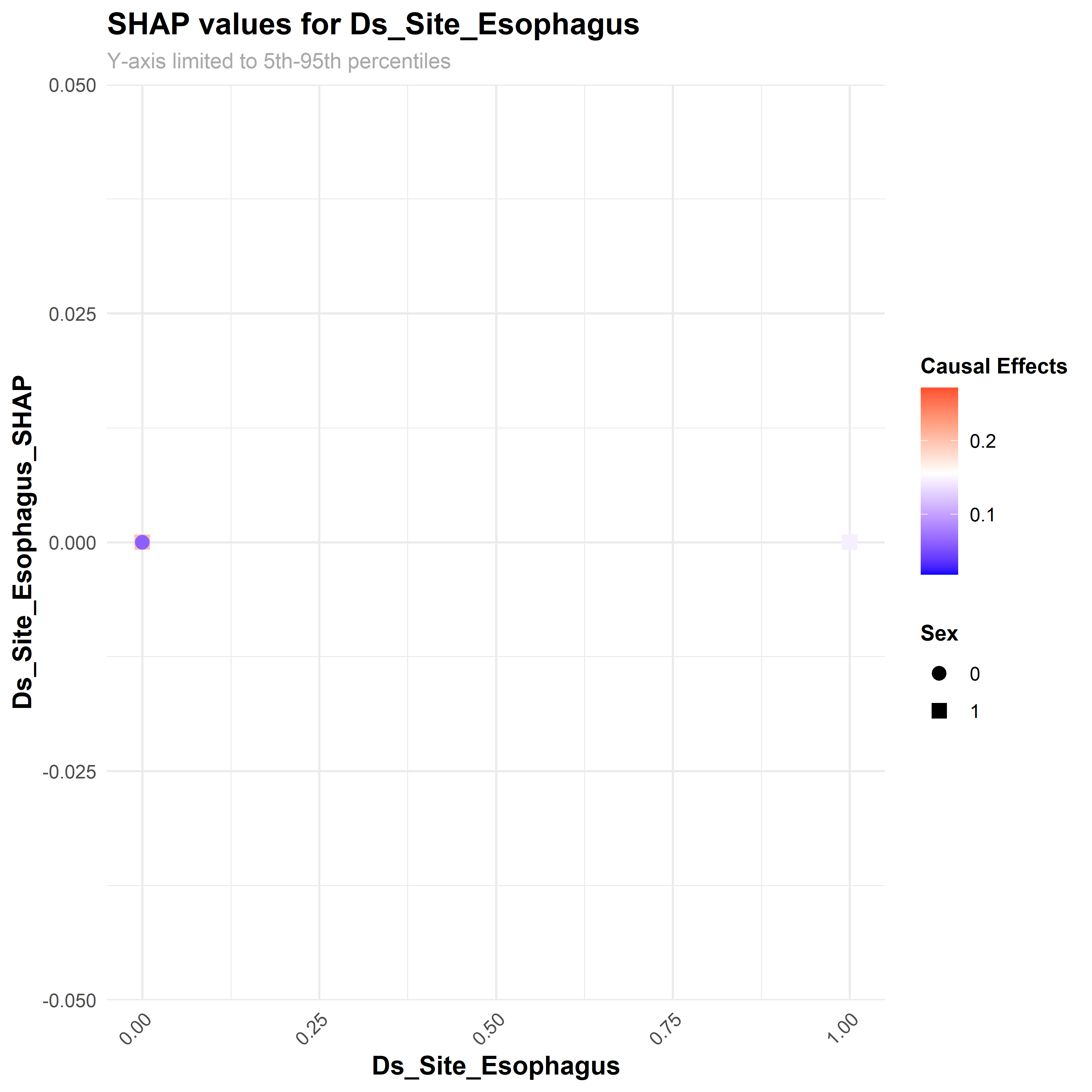}
        \caption{Esophagus}
    \end{subfigure}
    \hfill
    \begin{subfigure}[t]{0.45\textwidth}
        \centering
        \includegraphics[width=\textwidth]{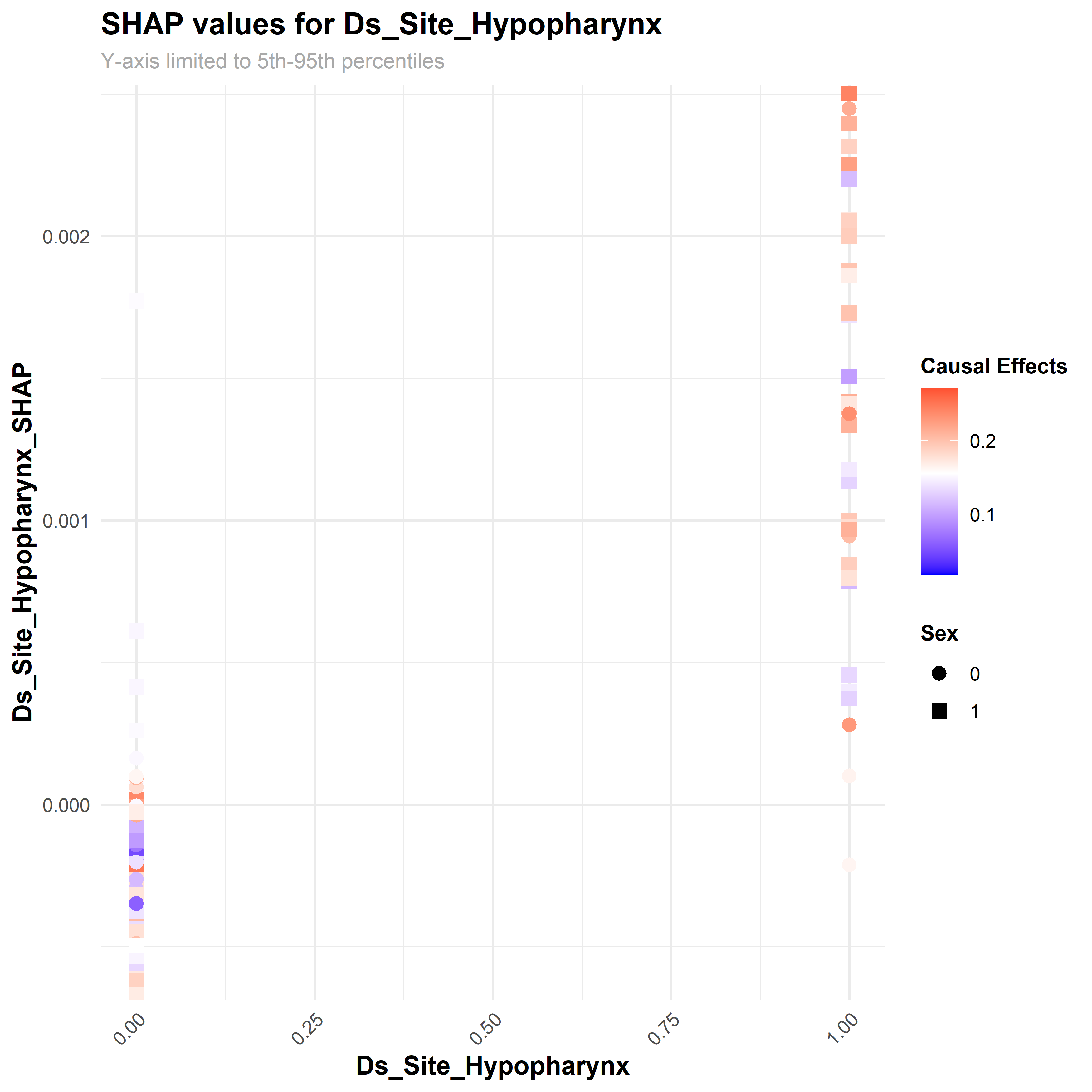}
        \caption{Hypopharynx}
    \end{subfigure}

    \vspace{1em}
    \begin{subfigure}[t]{0.45\textwidth}
        \centering
        \includegraphics[width=\textwidth]{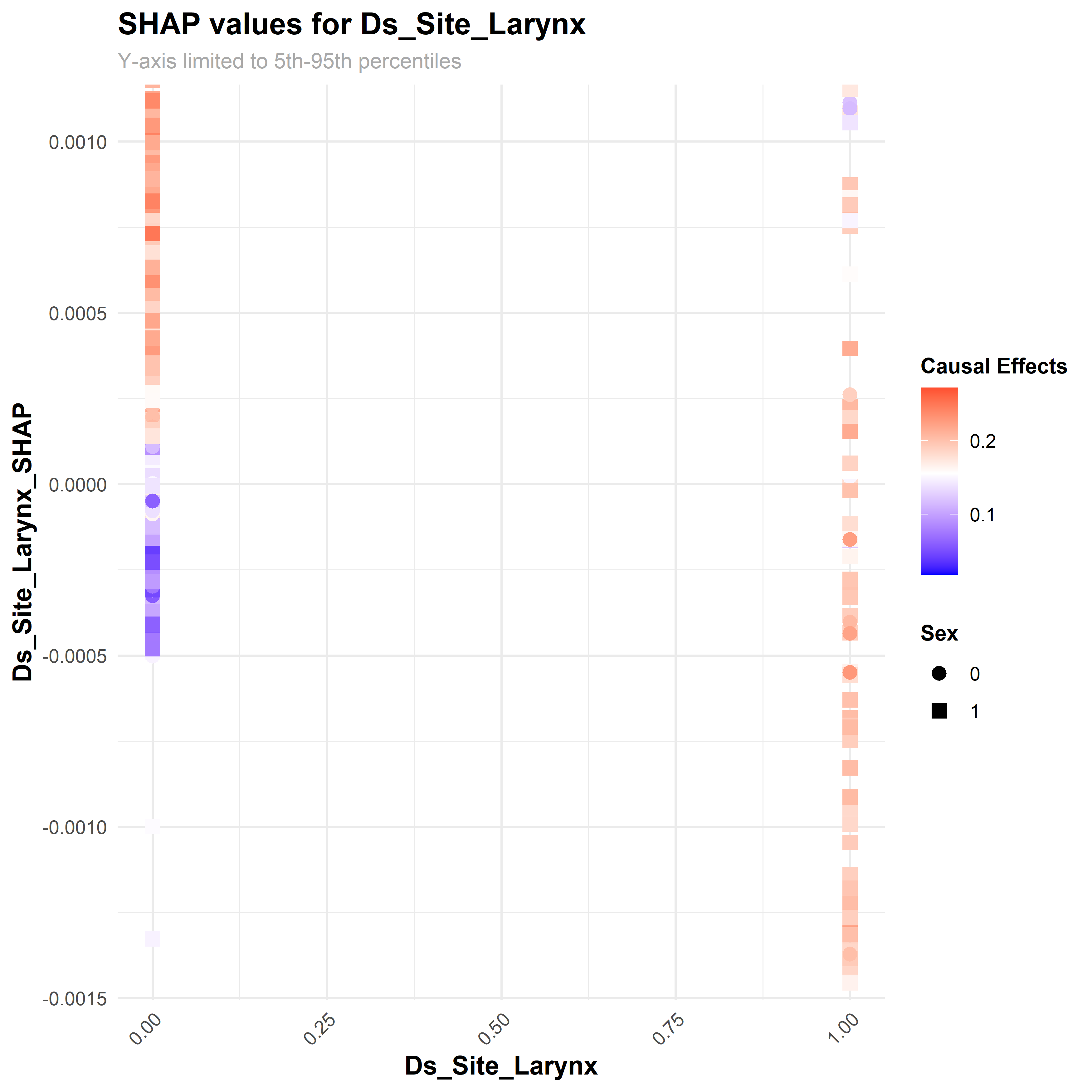}
        \caption{Larynx}
    \end{subfigure}
    \hfill
    \begin{subfigure}[t]{0.45\textwidth}
        \centering
        \includegraphics[width=\textwidth]{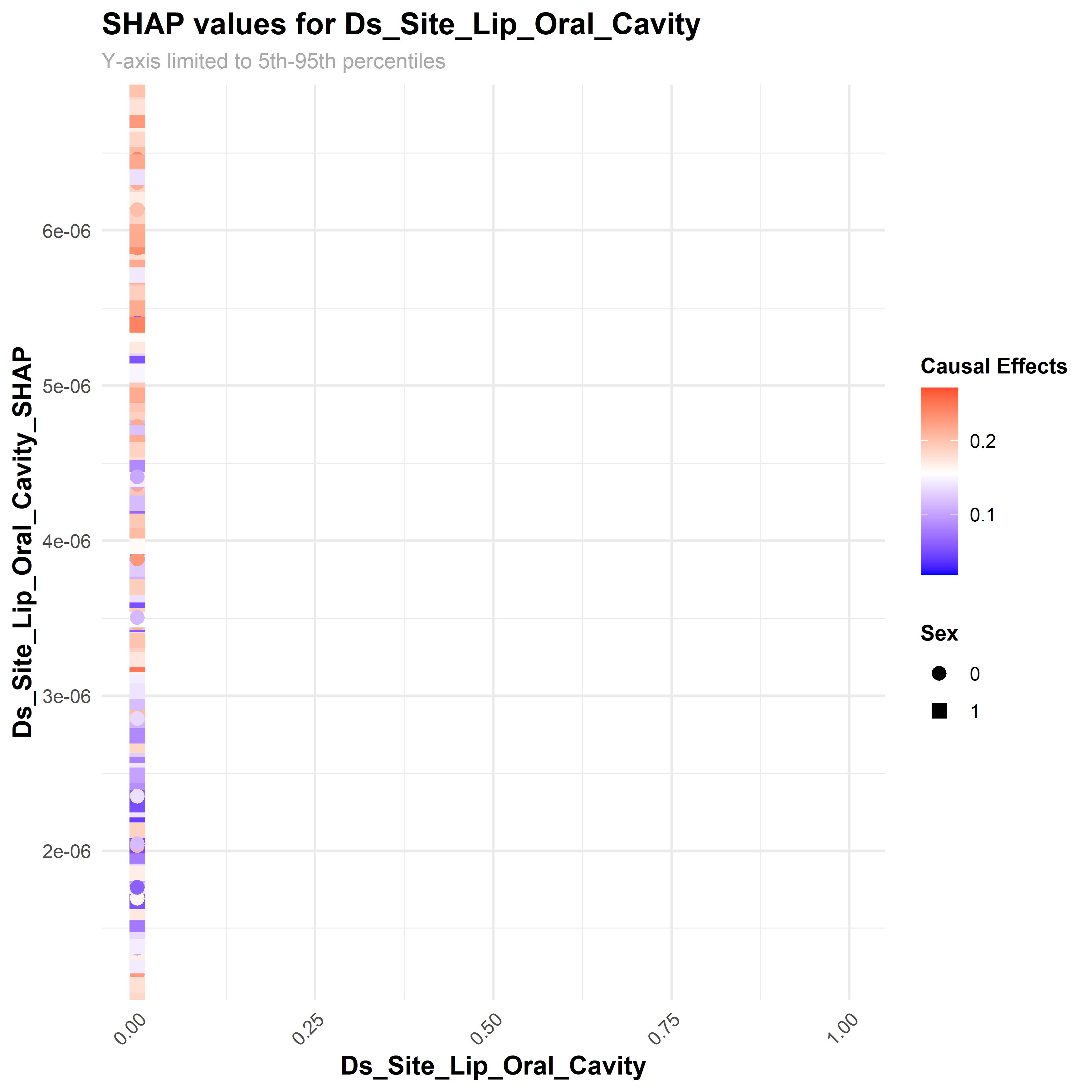}
        \caption{Lip/Oral Cavity}
    \end{subfigure}

    \vspace{1em}
    \begin{subfigure}[t]{0.45\textwidth}
        \centering
        \includegraphics[width=\textwidth]{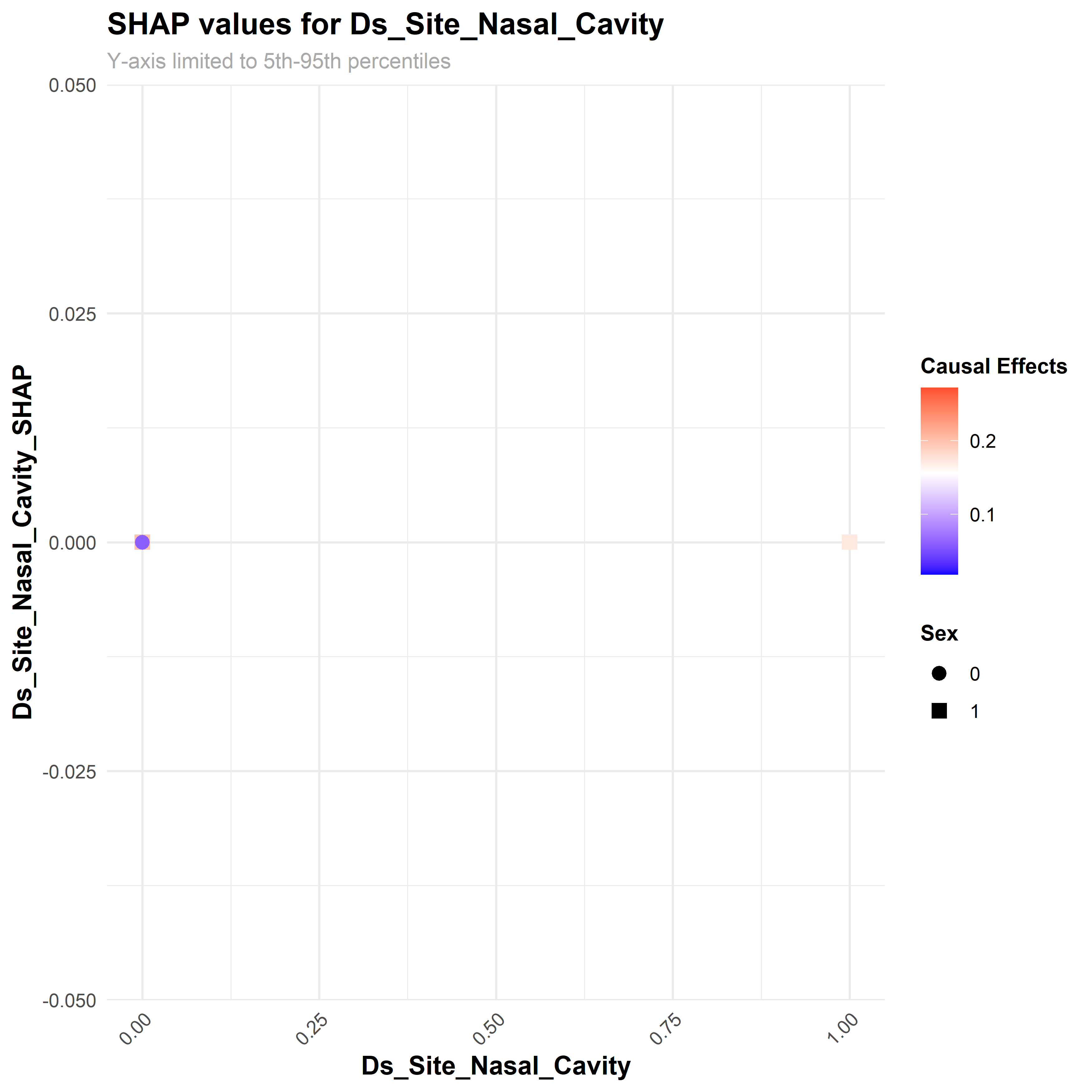}
        \caption{Nasal Cavity}
    \end{subfigure}
    \hfill
    \begin{subfigure}[t]{0.45\textwidth}
        \centering
        \includegraphics[width=\textwidth]{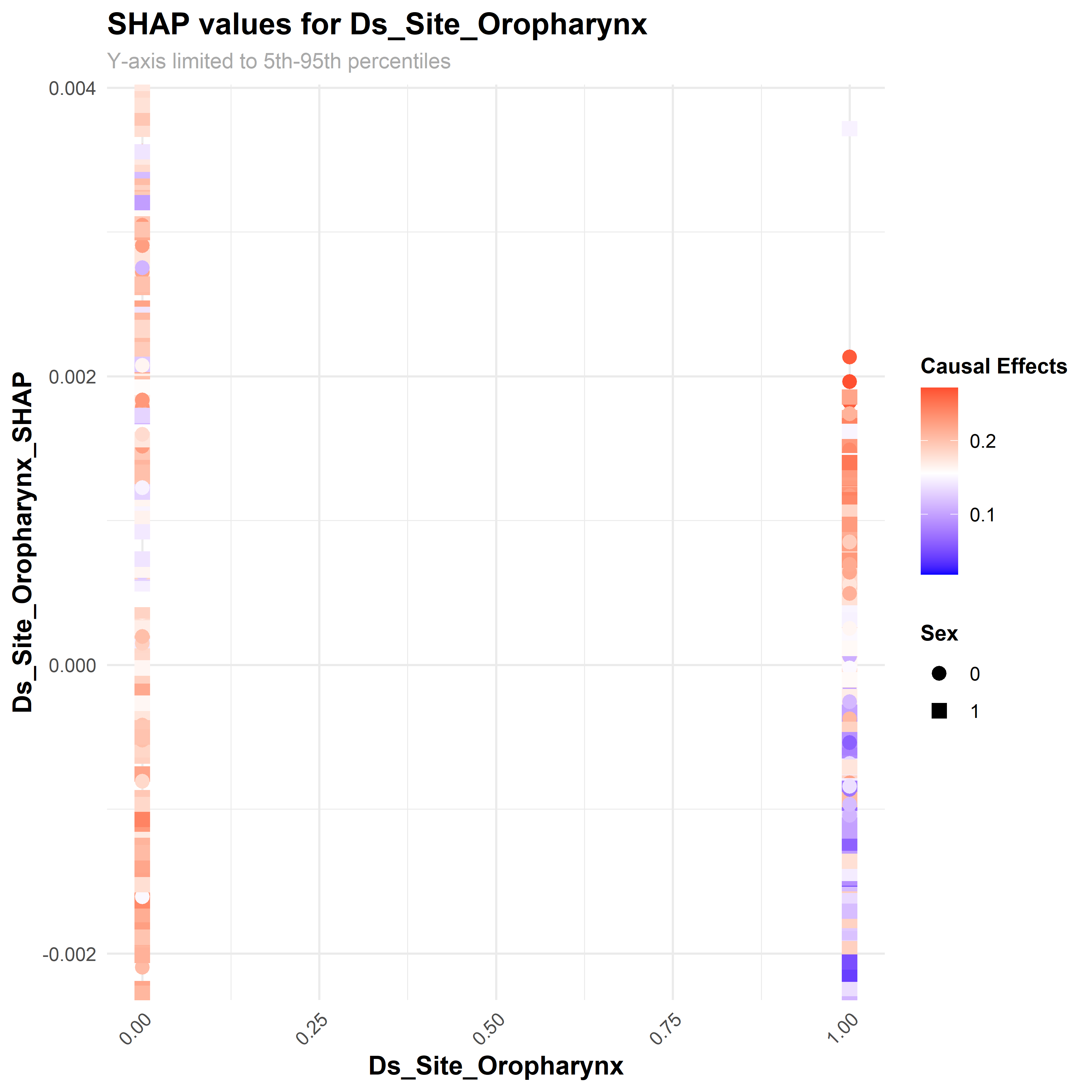}
        \caption{Oropharynx}
    \end{subfigure}

    \vspace{0.75em}

    \caption{SHAP values for primary tumor site. These anatomical subgroups exhibited low or diffuse contributions to treatment effect heterogeneity, though subtle site-specific trends may still hold clinical value.}
\end{figure}

\begin{figure}[ht]
    \centering

    \begin{subfigure}[t]{0.45\textwidth}
        \centering
        \includegraphics[width=\textwidth]{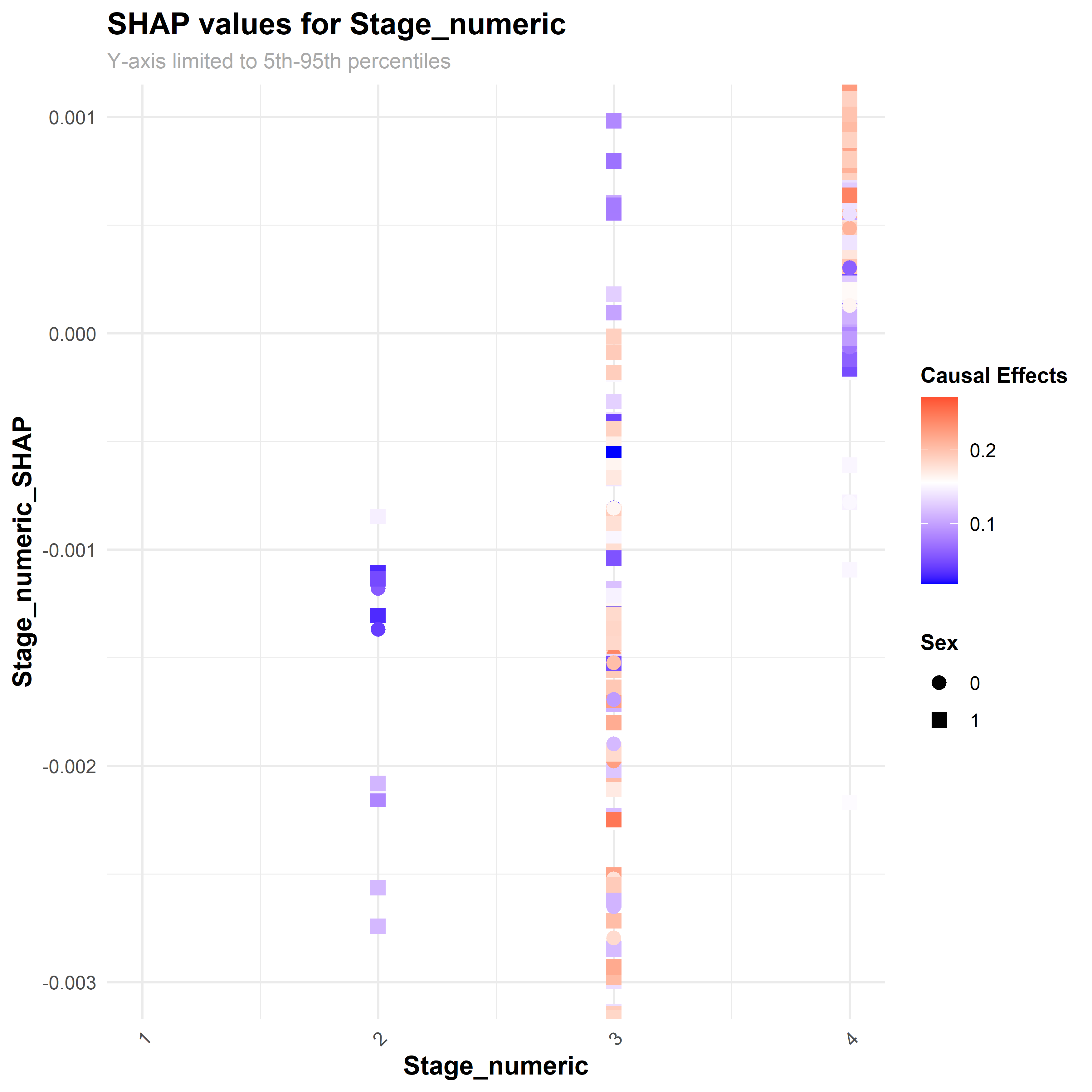}
        \caption{TNM Stage}
    \end{subfigure}
    \hfill
    \begin{subfigure}[t]{0.45\textwidth}
        \centering
        \includegraphics[width=\textwidth]{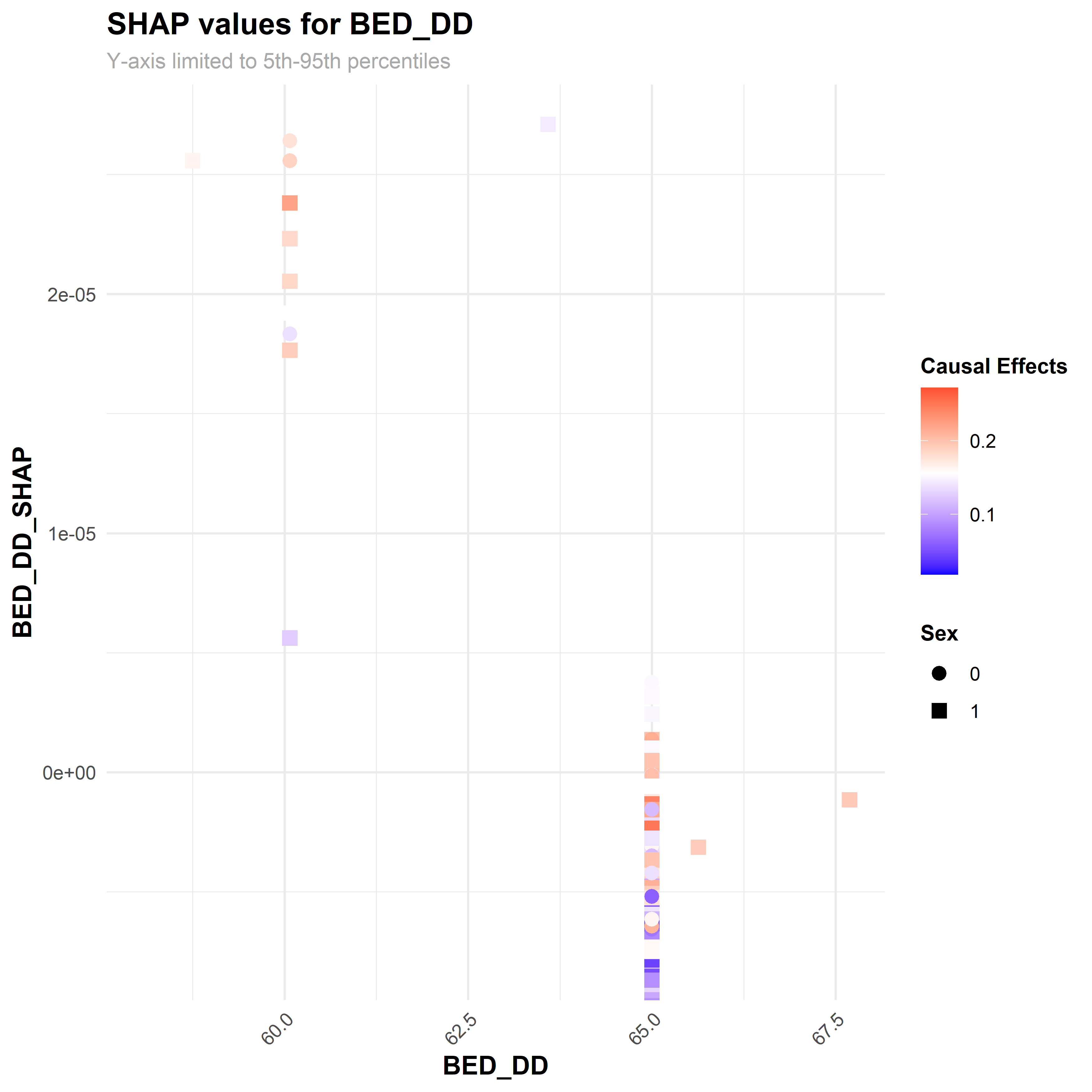}
        \caption{BED (Dose-Dependent)}
    \end{subfigure}

    \vspace{1em}

    \begin{subfigure}[t]{0.45\textwidth}
        \centering
        \includegraphics[width=\textwidth]{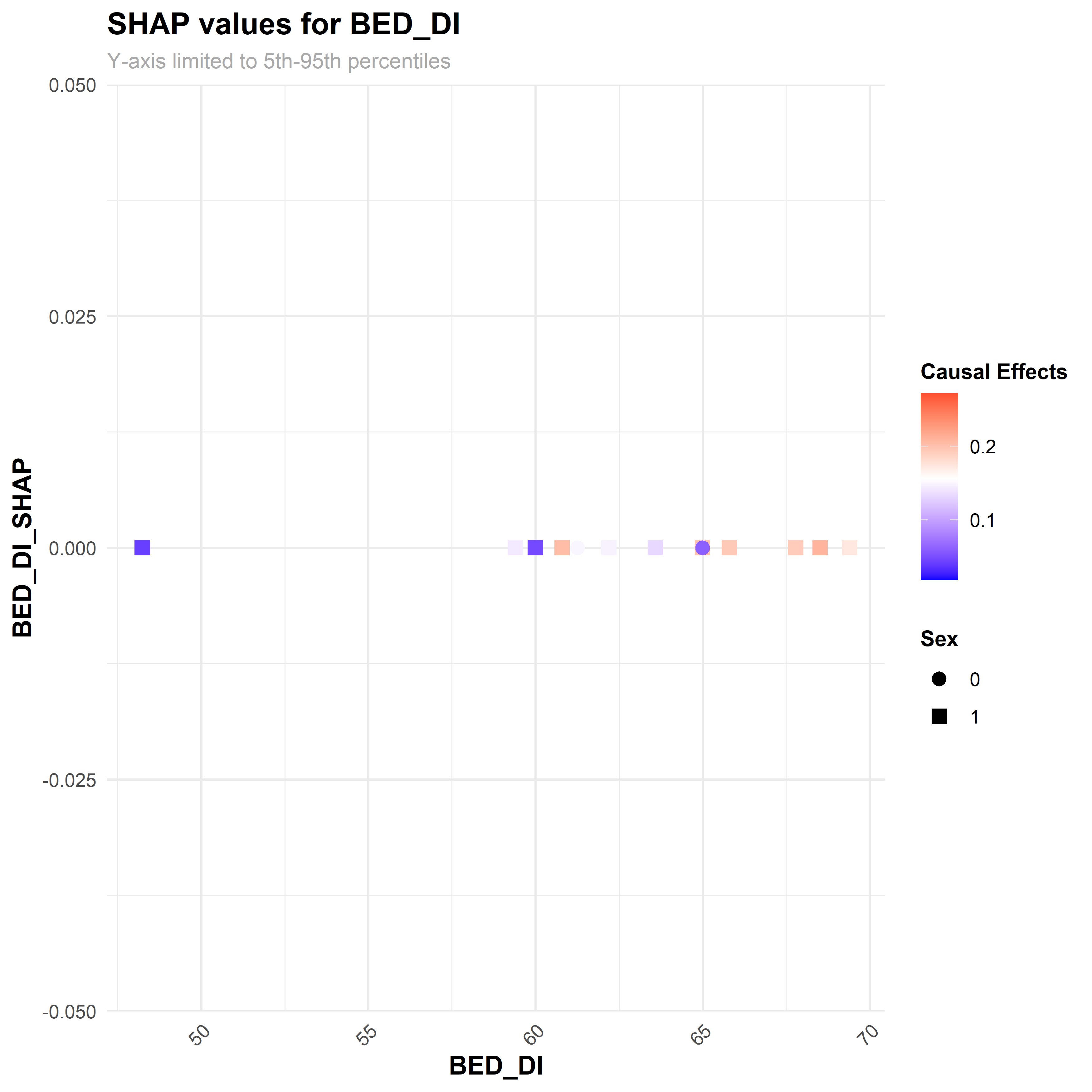}
        \caption{BED (Dose-Independent)}
    \end{subfigure}
    \hfill
    \begin{subfigure}[t]{0.45\textwidth}
        \centering
        \includegraphics[width=\textwidth]{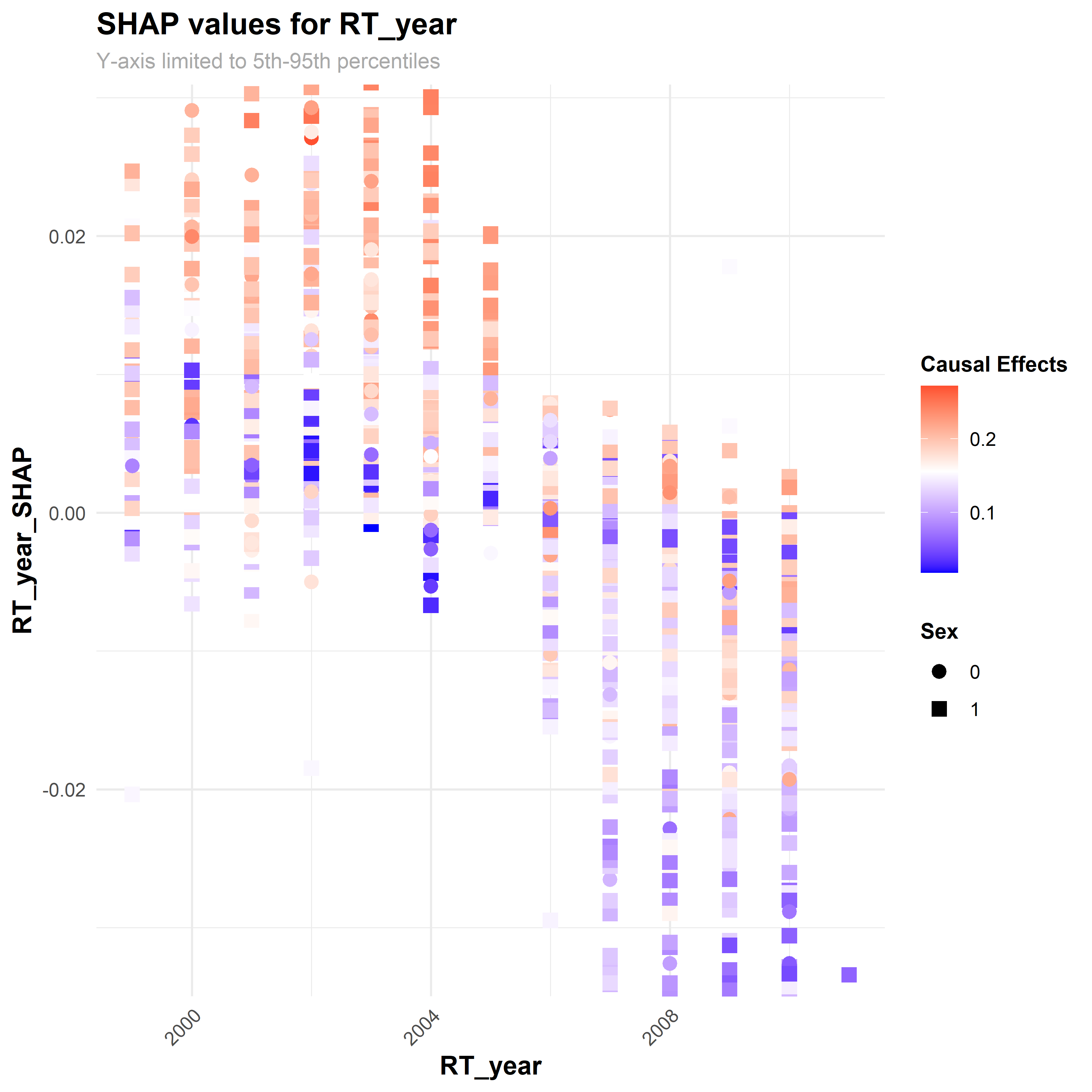}
        \caption{Year of RT}
    \end{subfigure}

    \vspace{1em}

    \begin{subfigure}[t]{0.45\textwidth}
        \centering
        \includegraphics[width=\textwidth]{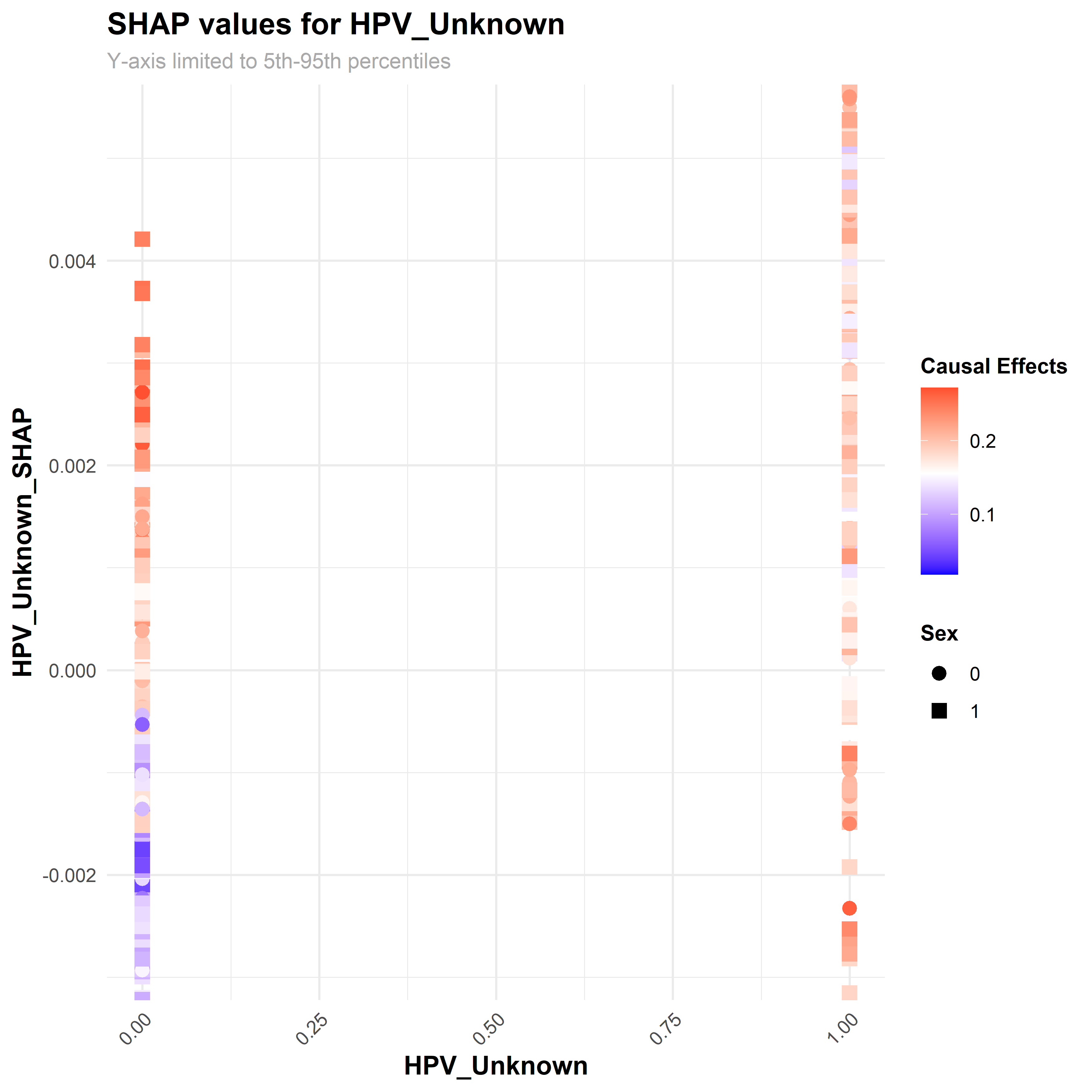}
        \caption{HPV Unknown}
    \end{subfigure}

    \vspace{0.75em}
    
    \caption{SHAP values for additional covariates, including TNM stage, treatment year, and dose-related metrics. These features showed limited or context-specific contributions to treatment effect heterogeneity.}
\end{figure}

\clearpage

\subsection*{C.3 \quad Distributions of Individualized Treatment Effects}

We visualize the estimated treatment effect distributions for both RMST and survival probability (SP) at intervals ranging from 12 to 120 months. Figures 4 and 5 show individual-level causal effects derived from the causal survival forest at each time horizon. \\

\begin{figure}[ht]
    \centering
    \textbf{RMST Treatment Effect Distributions}\par\vspace{1.75em}
    
    \begin{subfigure}[t]{0.26\textwidth}
        \centering
        \includegraphics[width=\textwidth]{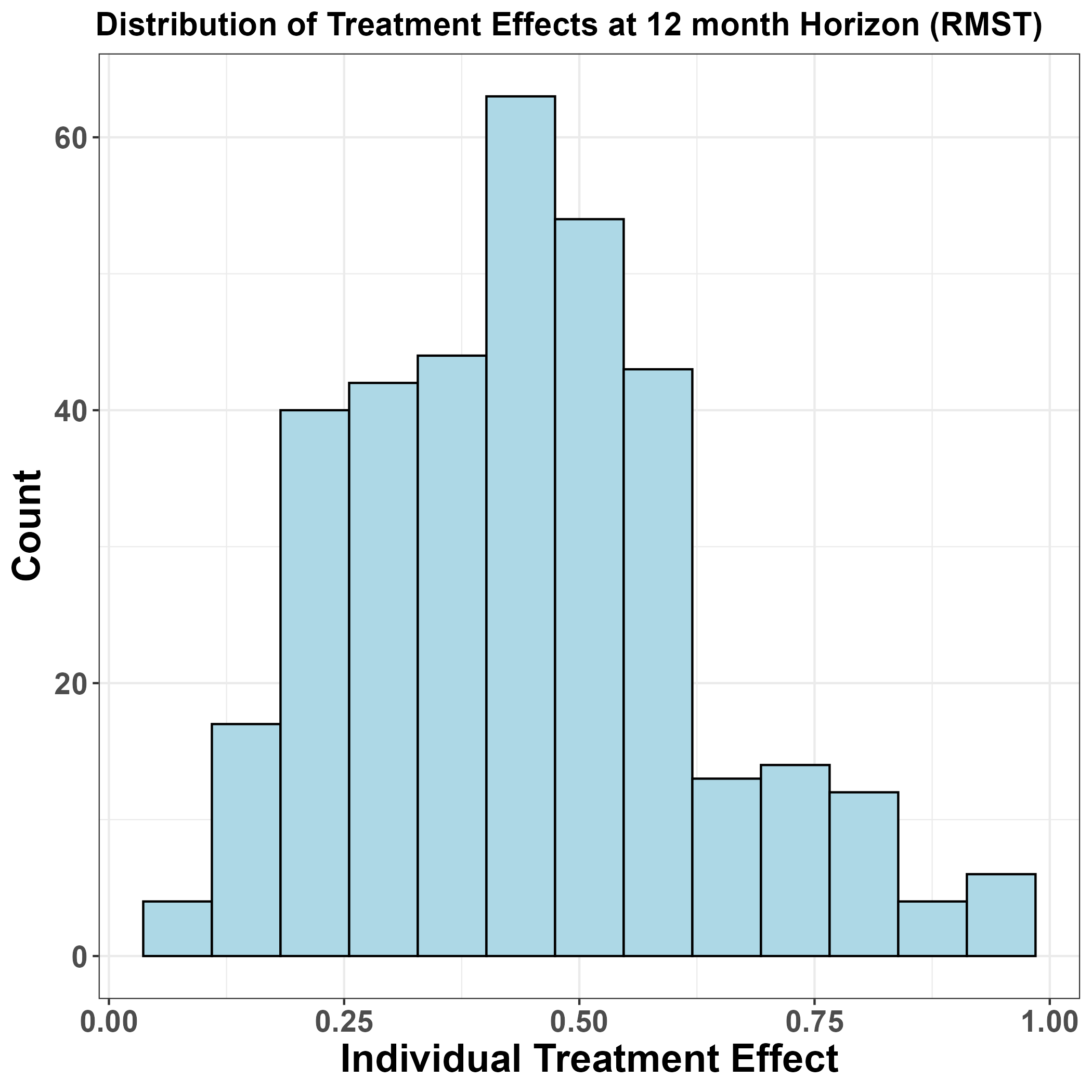}
        \caption{12 months}
    \end{subfigure}
    \hspace{0.02\textwidth}
    \begin{subfigure}[t]{0.26\textwidth}
        \centering
        \includegraphics[width=\textwidth]{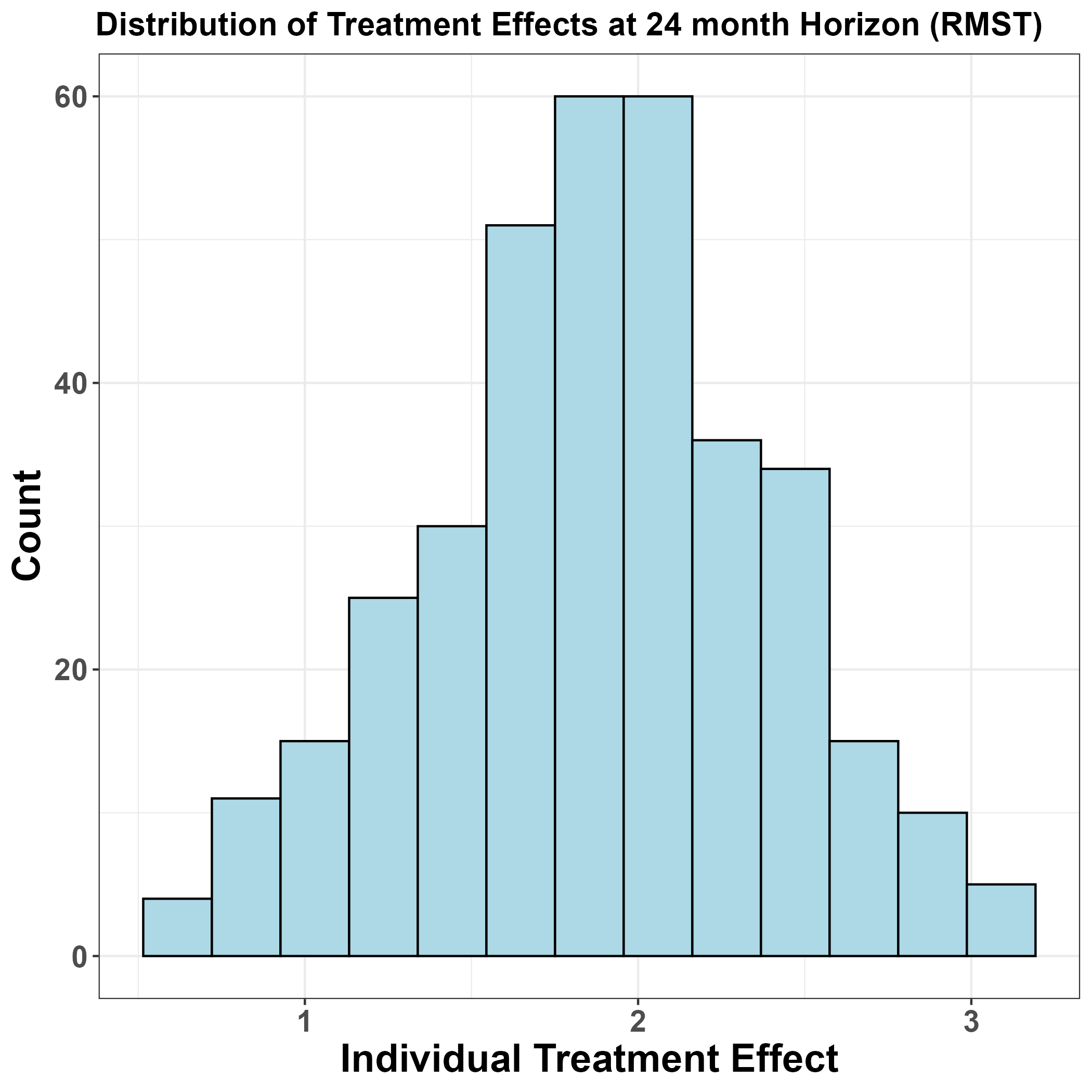}
        \caption{24 months}
    \end{subfigure}
    \hspace{0.02\textwidth}
    \begin{subfigure}[t]{0.26\textwidth}
        \centering
        \includegraphics[width=\textwidth]{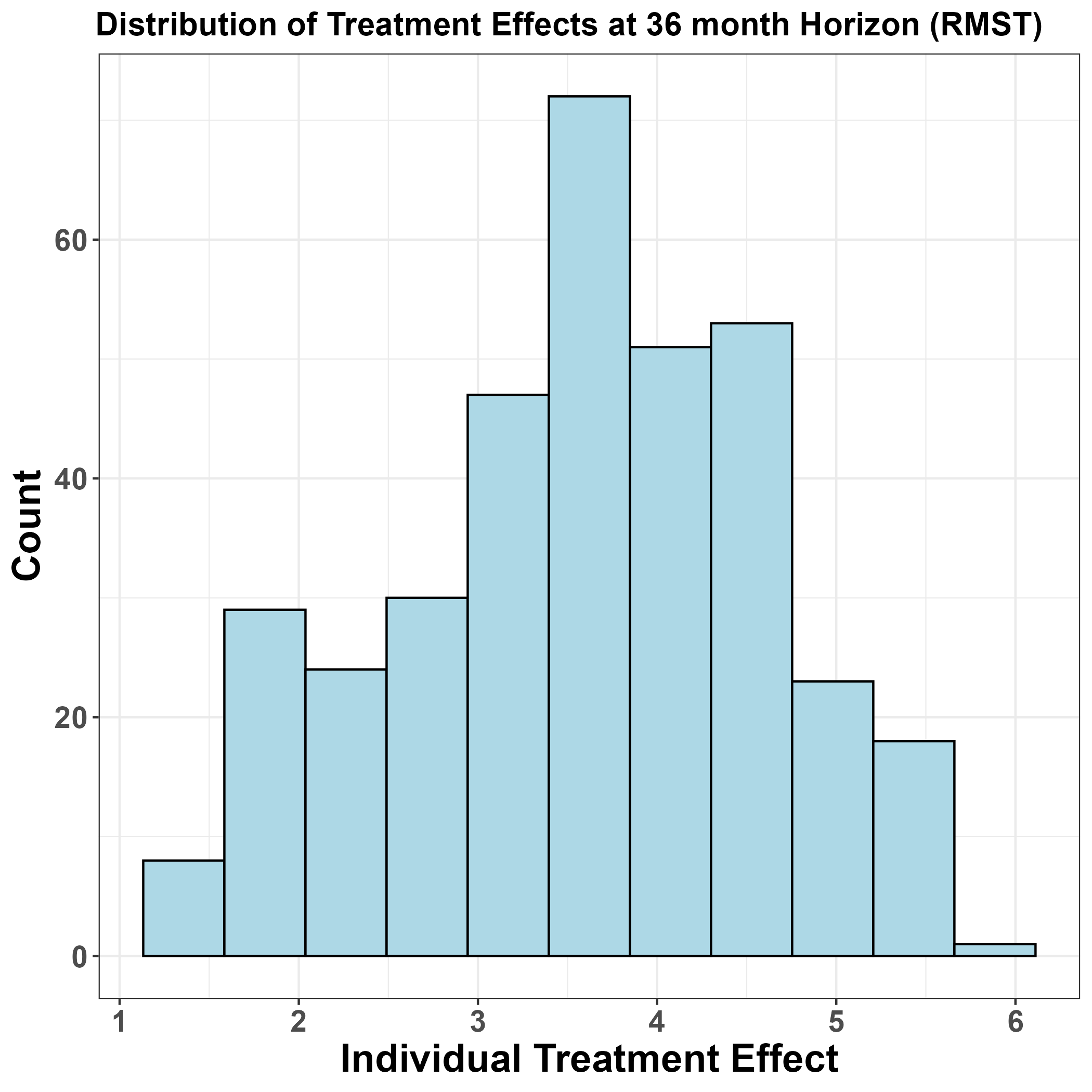}
        \caption{36 months}
    \end{subfigure}

    \vspace{0.75em}

    \begin{subfigure}[t]{0.26\textwidth}
        \centering
        \includegraphics[width=\textwidth]{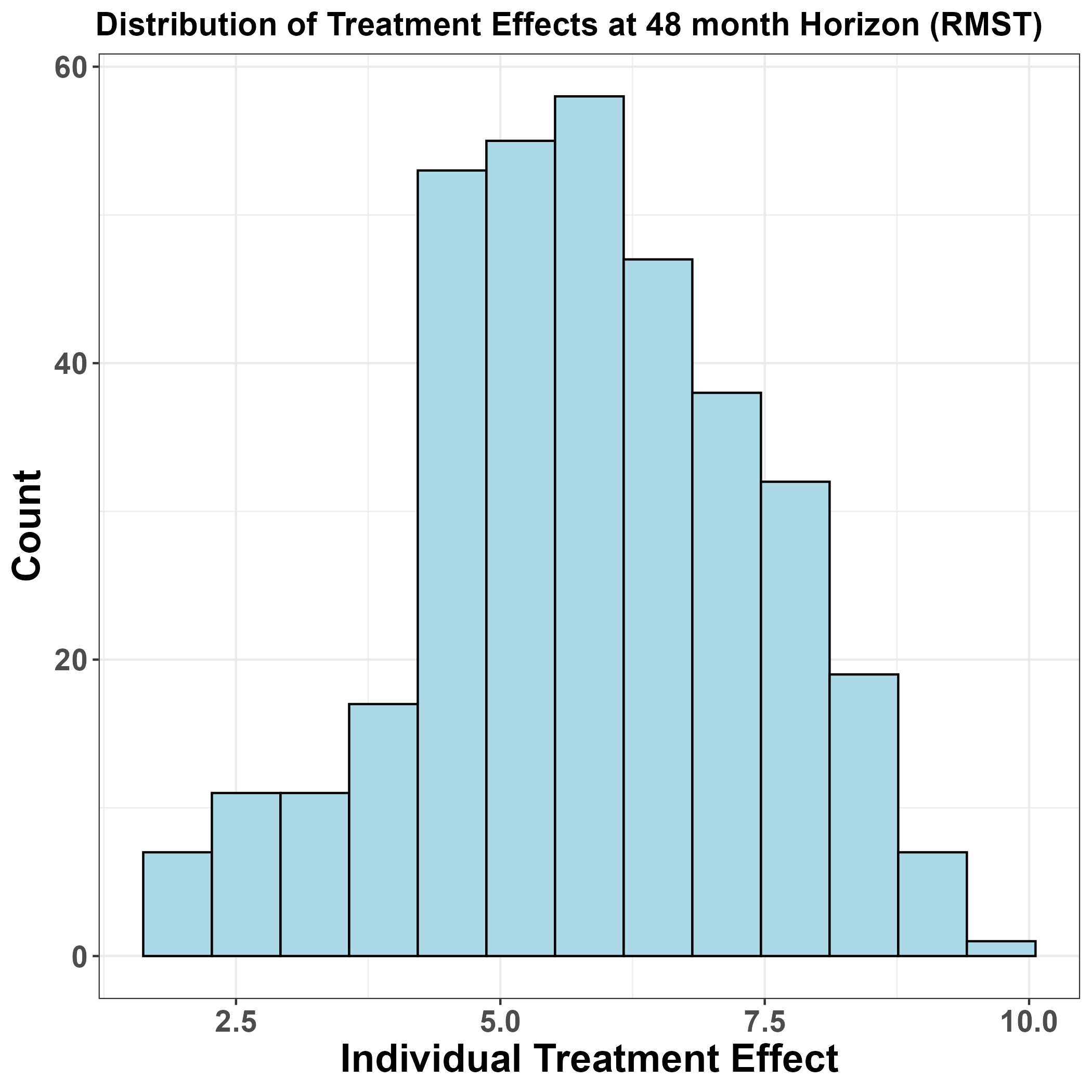}
        \caption{48 months}
    \end{subfigure}
    \hspace{0.02\textwidth}
    \begin{subfigure}[t]{0.26\textwidth}
        \centering
        \includegraphics[width=\textwidth]{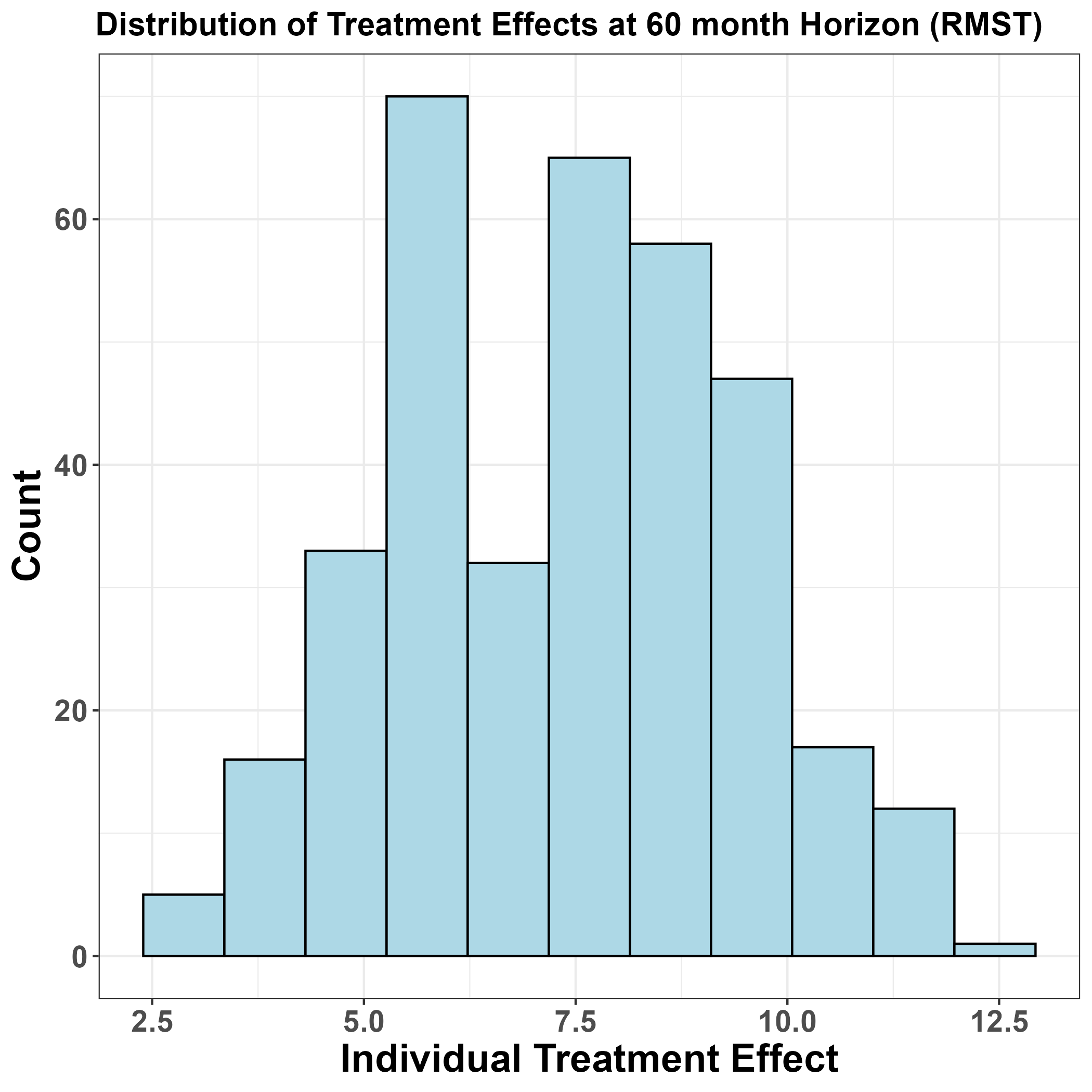}
        \caption{60 months}
    \end{subfigure}
    \hspace{0.02\textwidth}
    \begin{subfigure}[t]{0.26\textwidth}
        \centering
        \includegraphics[width=\textwidth]{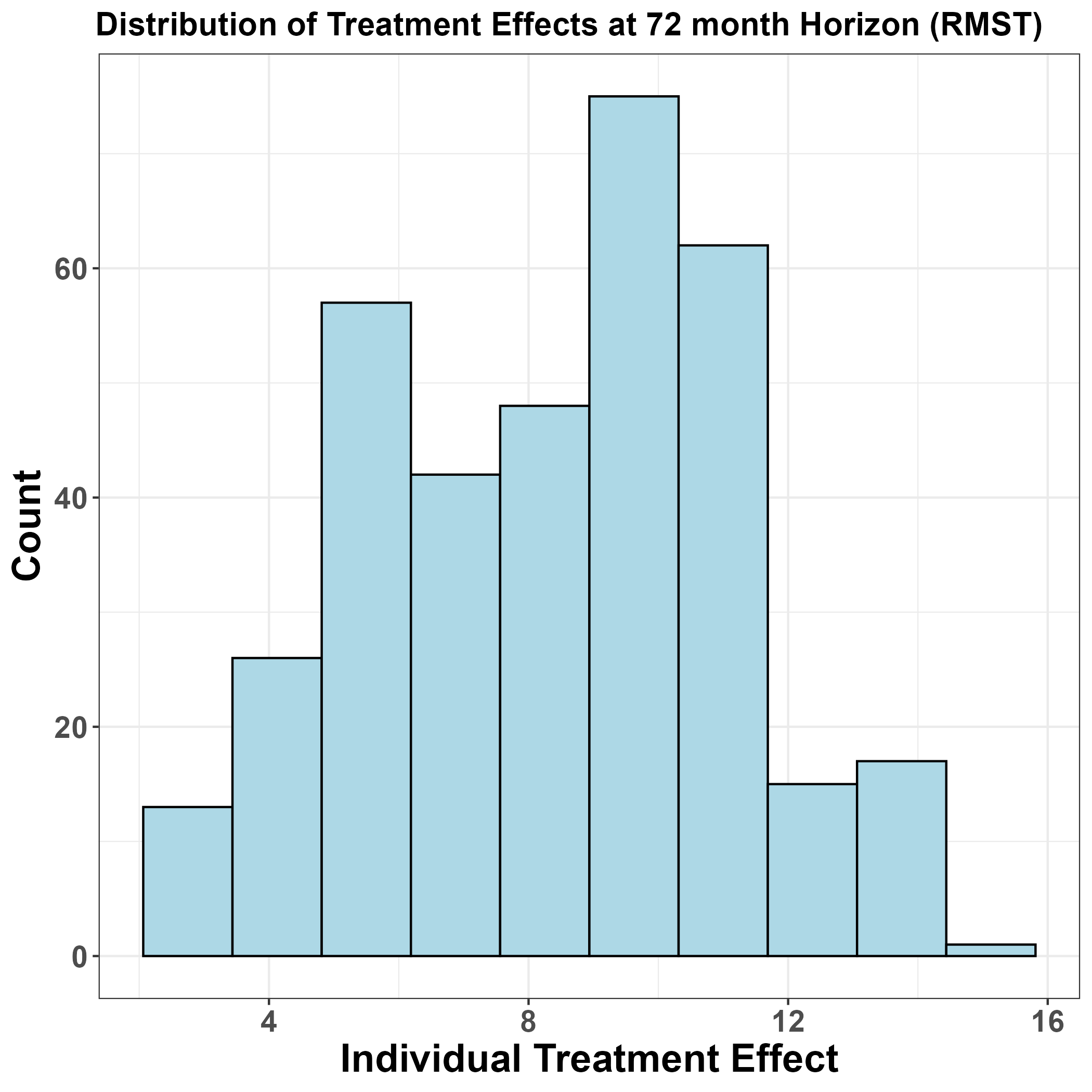}
        \caption{72 months}
    \end{subfigure}

    \vspace{0.75em}

    \begin{subfigure}[t]{0.26\textwidth}
        \centering
        \includegraphics[width=\textwidth]{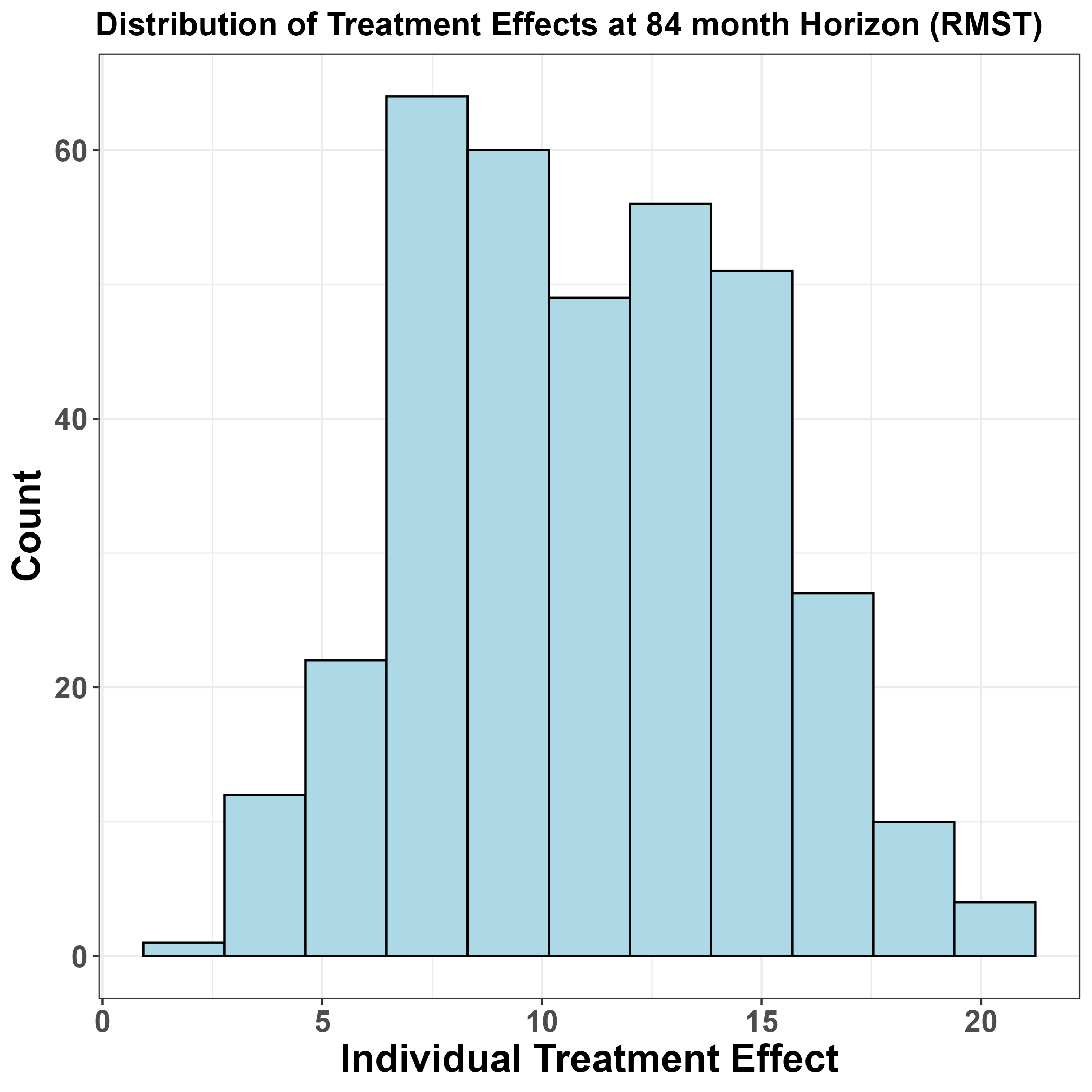}
        \caption{84 months}
    \end{subfigure}
    \hspace{0.02\textwidth}
    \begin{subfigure}[t]{0.26\textwidth}
        \centering
        \includegraphics[width=\textwidth]{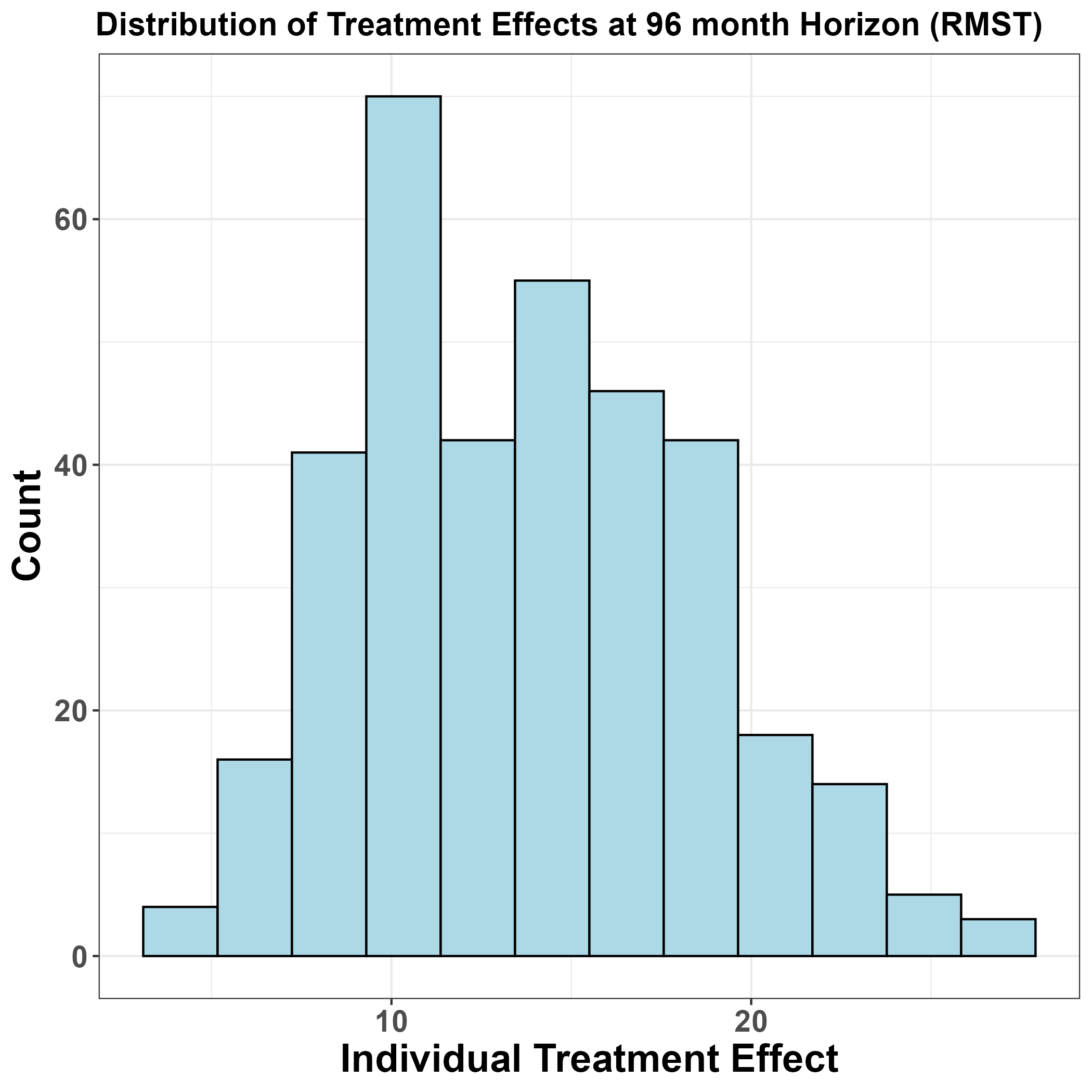}
        \caption{96 months}
    \end{subfigure}
    \hspace{0.02\textwidth}
    \begin{subfigure}[t]{0.26\textwidth}
        \centering
        \includegraphics[width=\textwidth]{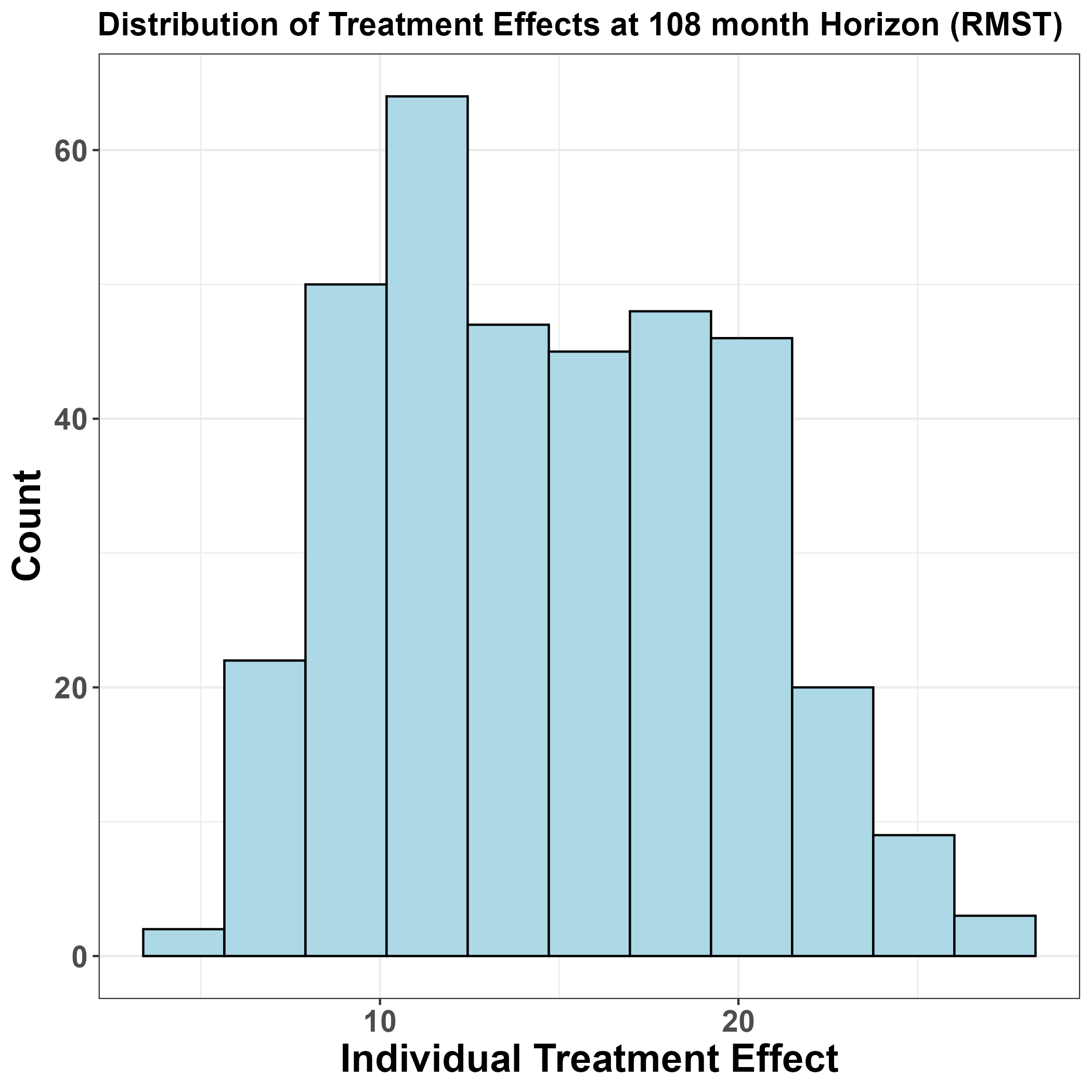}
        \caption{108 months}
    \end{subfigure}

    \vspace{0.75em}

    \begin{subfigure}[t]{0.26\textwidth}
        \hspace{0.34\textwidth}  
        \includegraphics[width=\textwidth]{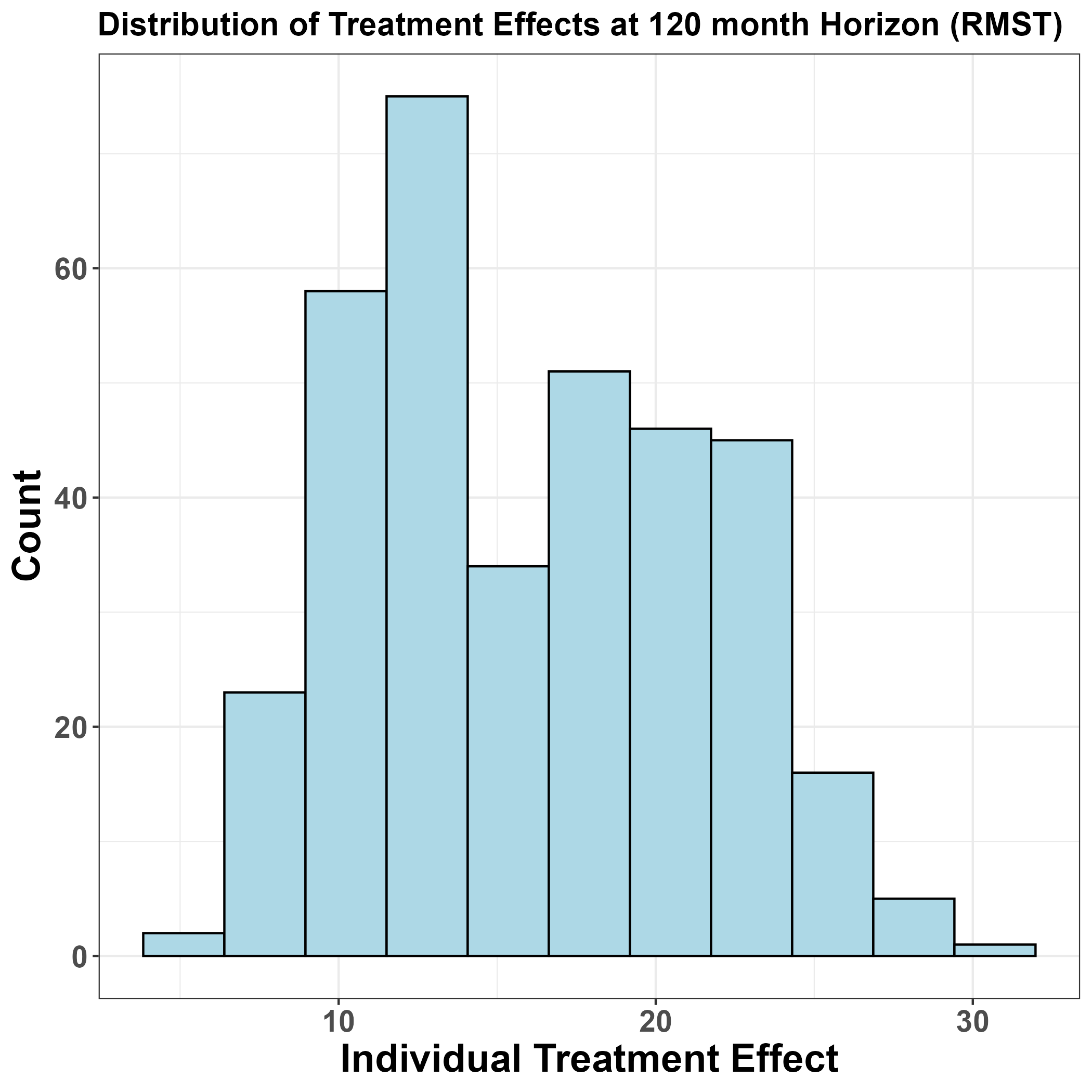}
        \caption{120 months}
    \end{subfigure}

    \vspace{1em}
    \caption{Distributions of estimated RMST-based treatment effects over time. Each panel shows the individual-level causal effect at a specific horizon as learned by the causal survival forest.}
    \label{fig:rmst_grid_centered}
\end{figure}

\begin{figure}[ht]
    \centering
    \textbf{Survival Probability Treatment Effect Distributions}\par\vspace{1.75em}

    \begin{subfigure}[t]{0.28\textwidth}
        \centering
        \includegraphics[width=\textwidth]{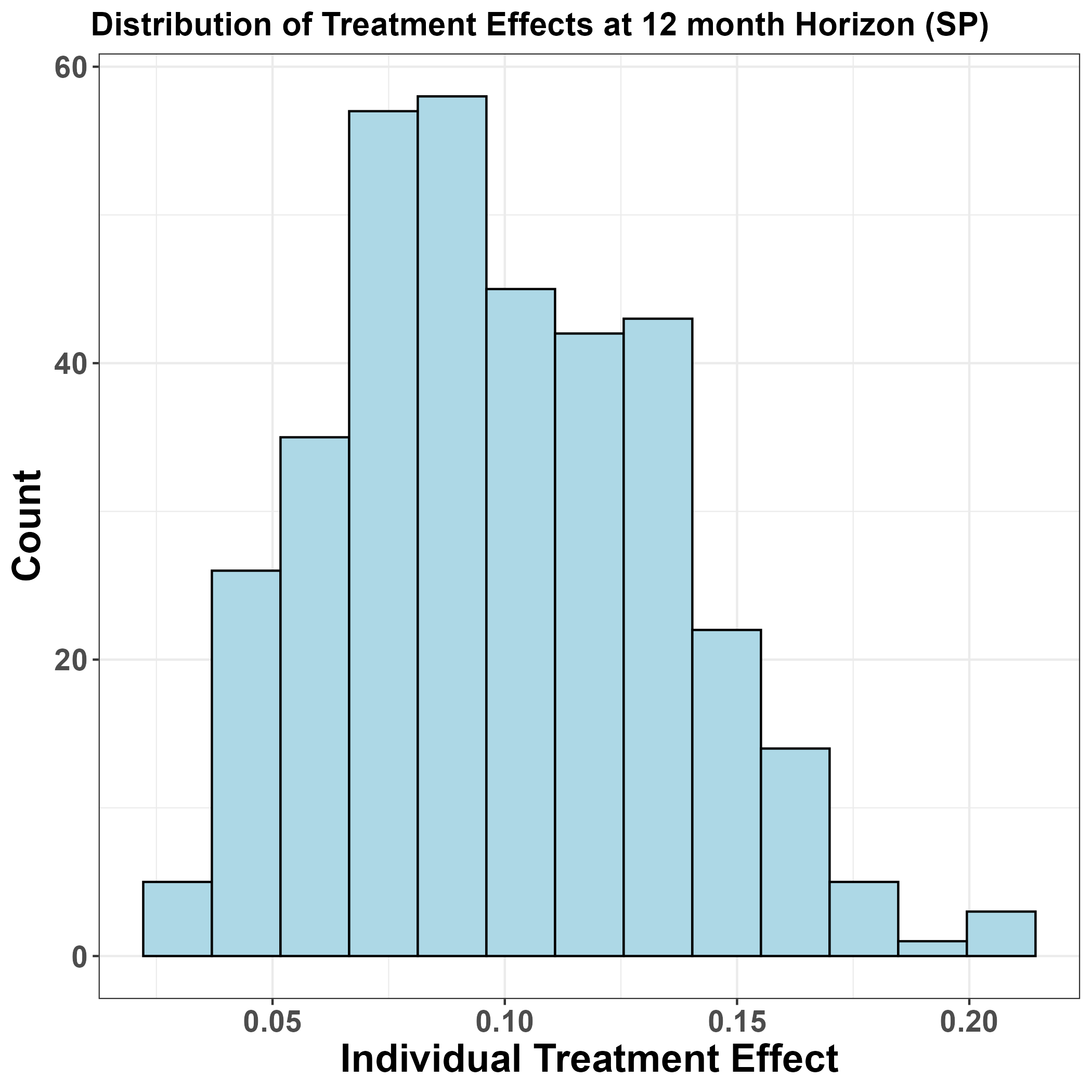}
        \caption{12 months}
    \end{subfigure}
    \hspace{0.02\textwidth}
    \begin{subfigure}[t]{0.28\textwidth}
        \centering
        \includegraphics[width=\textwidth]{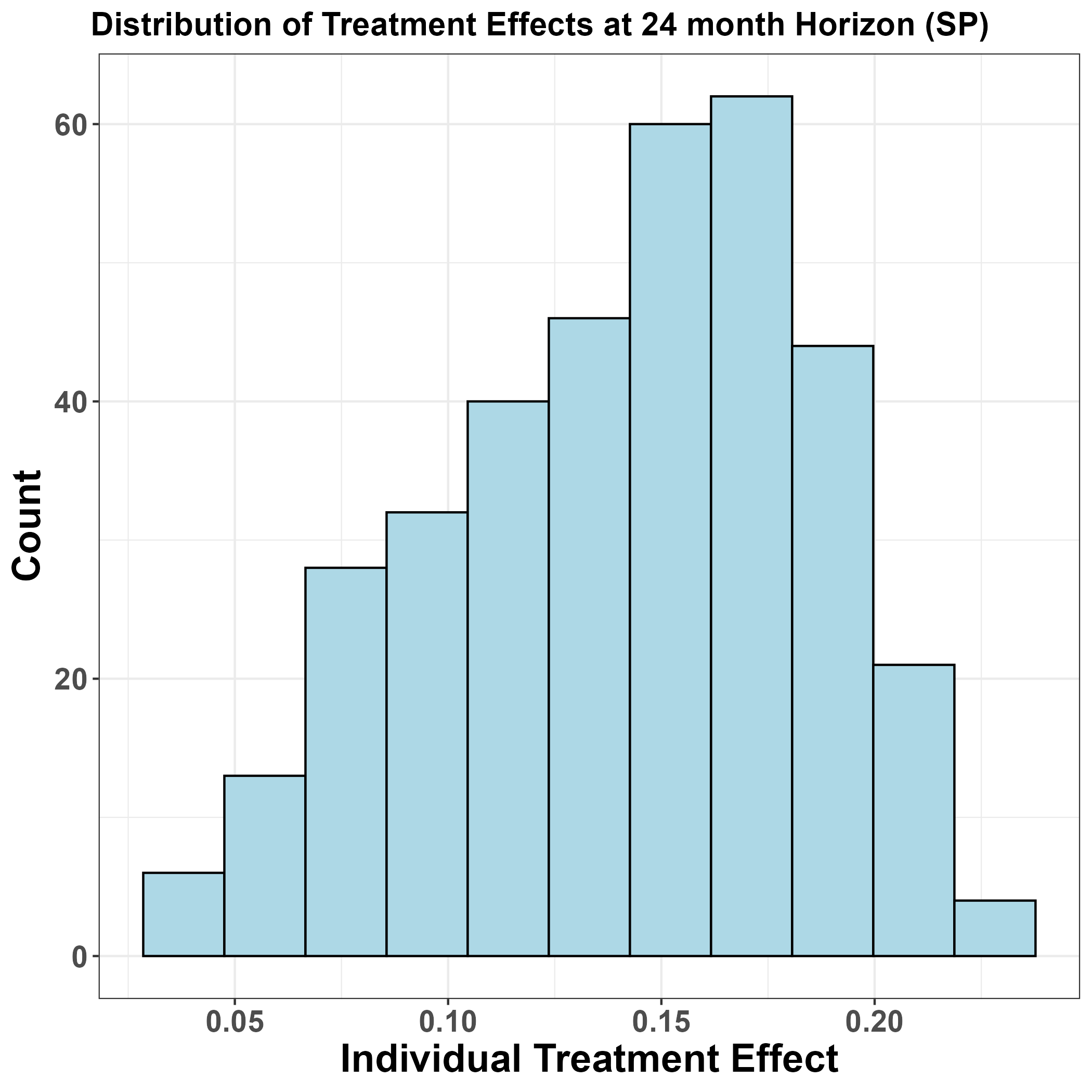}
        \caption{24 months}
    \end{subfigure}
    \hspace{0.02\textwidth}
    \begin{subfigure}[t]{0.28\textwidth}
        \centering
        \includegraphics[width=\textwidth]{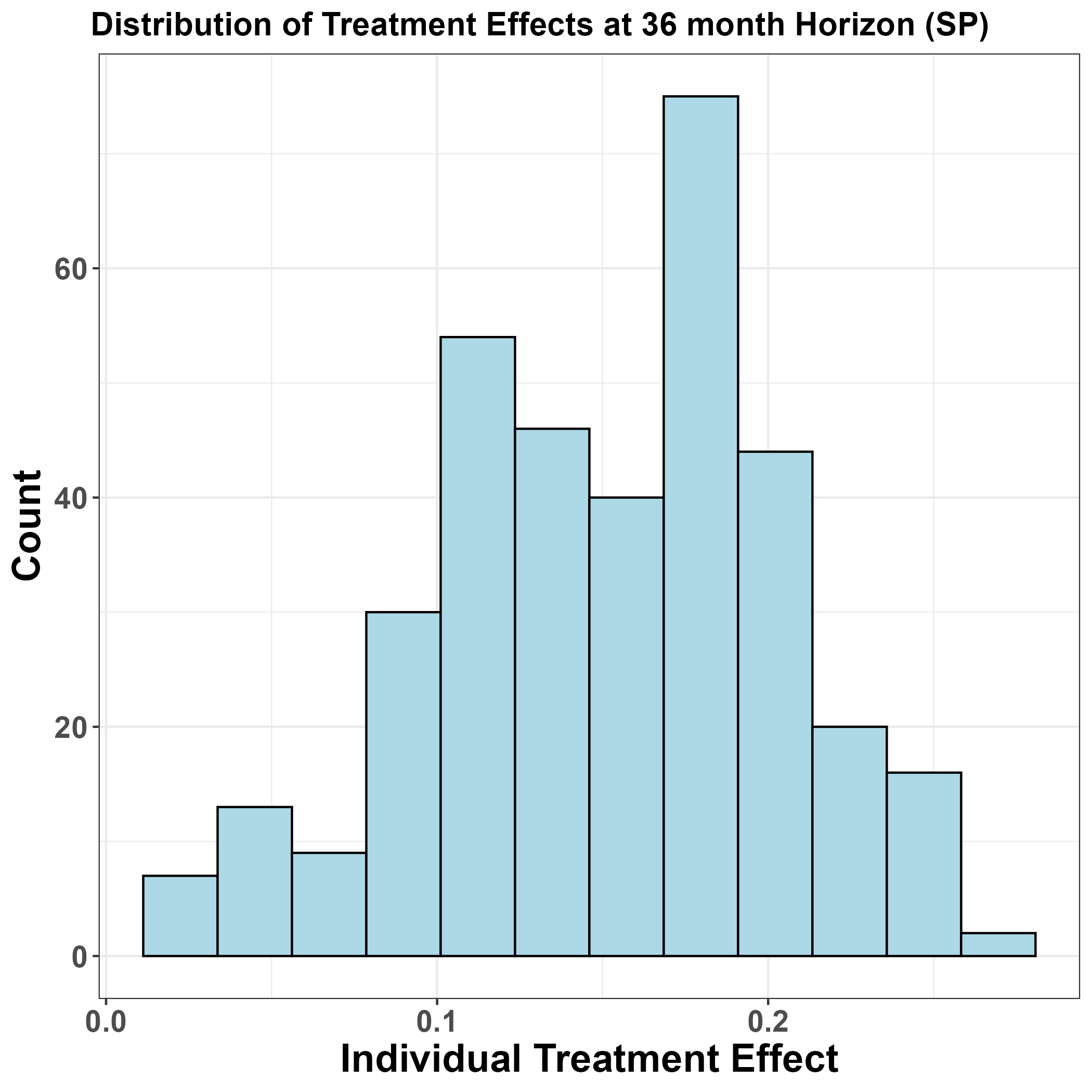}
        \caption{36 months}
    \end{subfigure}

    \vspace{0.75em}

    \begin{subfigure}[t]{0.28\textwidth}
        \centering
        \includegraphics[width=\textwidth]{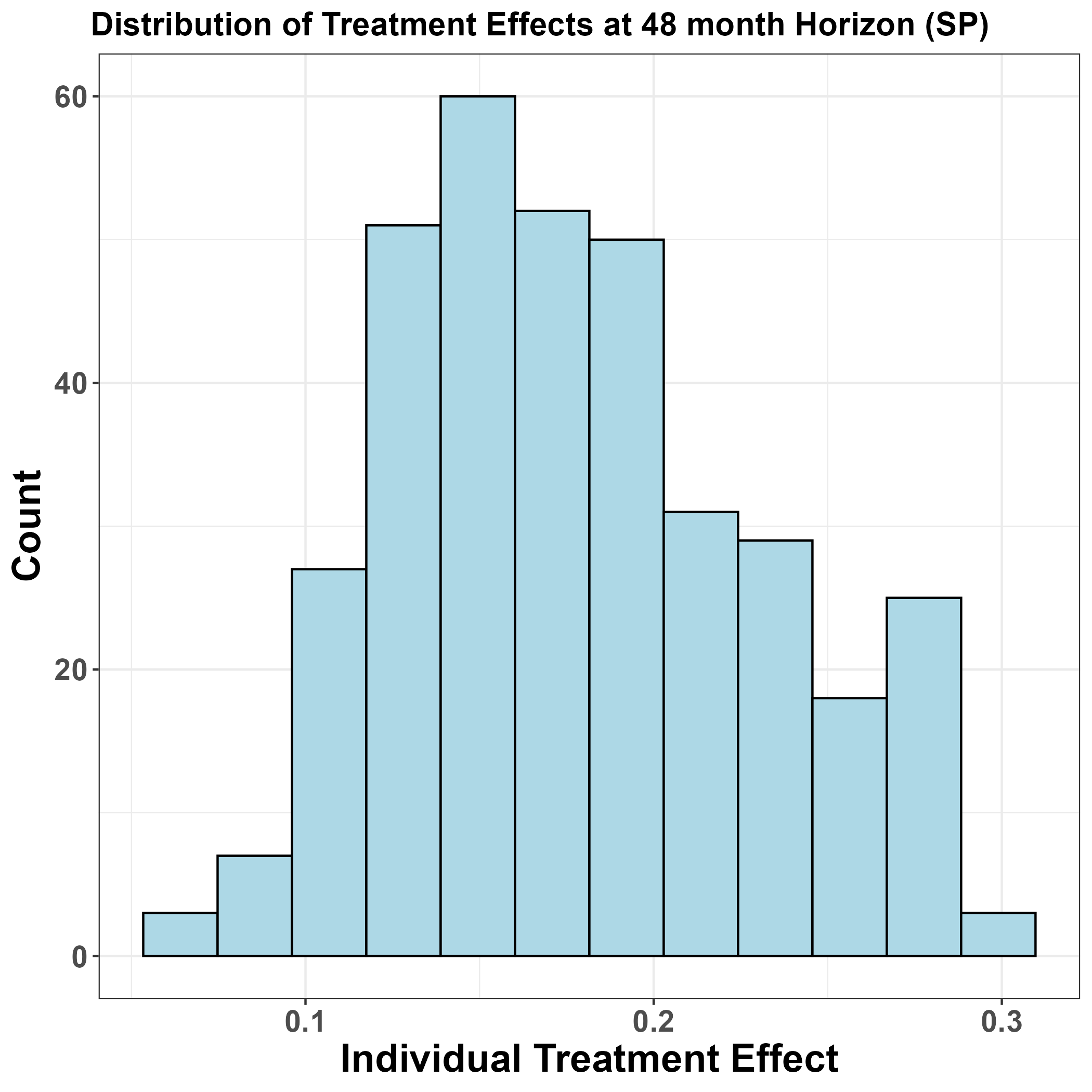}
        \caption{48 months}
    \end{subfigure}
    \hspace{0.02\textwidth}
    \begin{subfigure}[t]{0.28\textwidth}
        \centering
        \includegraphics[width=\textwidth]{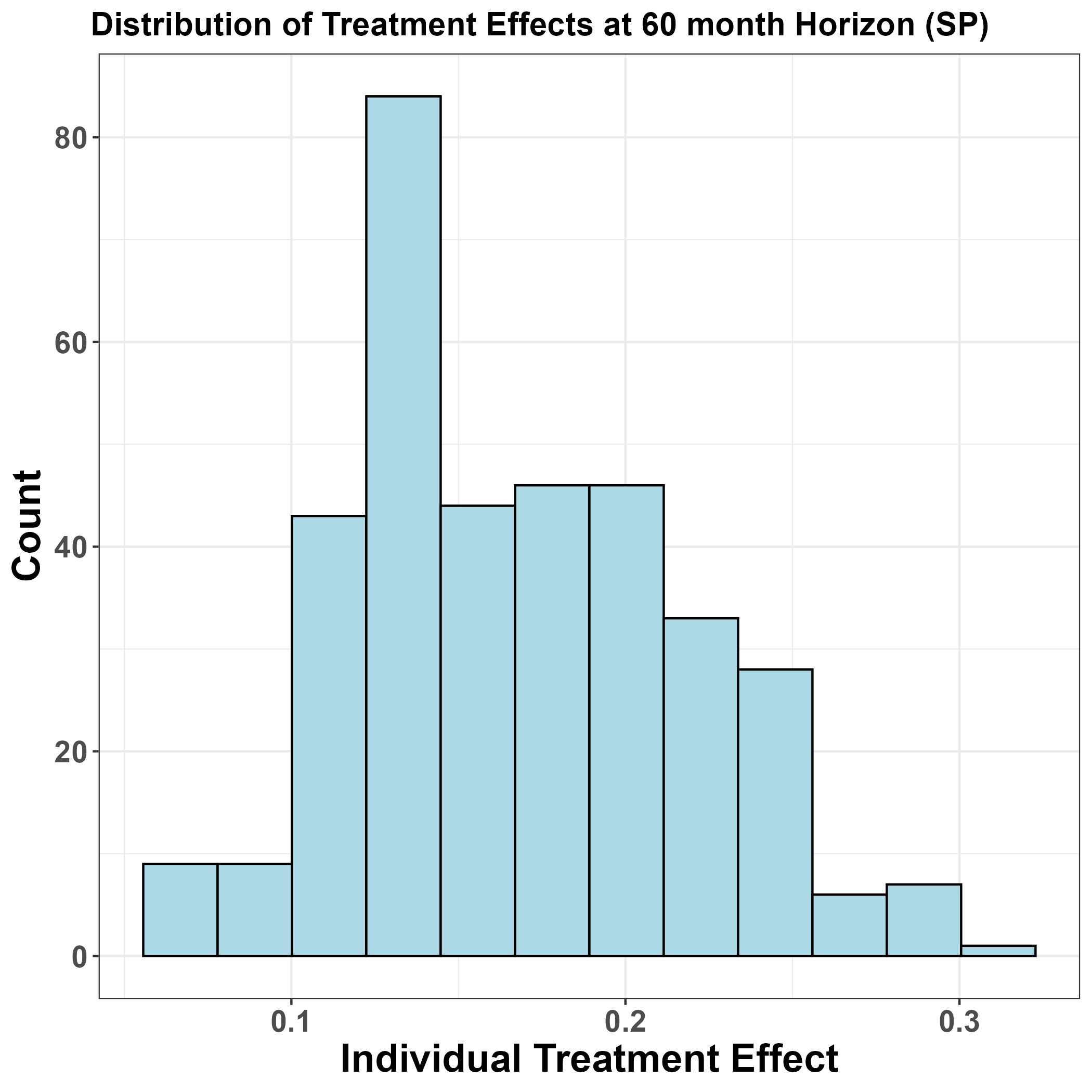}
        \caption{60 months}
    \end{subfigure}
    \hspace{0.02\textwidth}
    \begin{subfigure}[t]{0.28\textwidth}
        \centering
        \includegraphics[width=\textwidth]{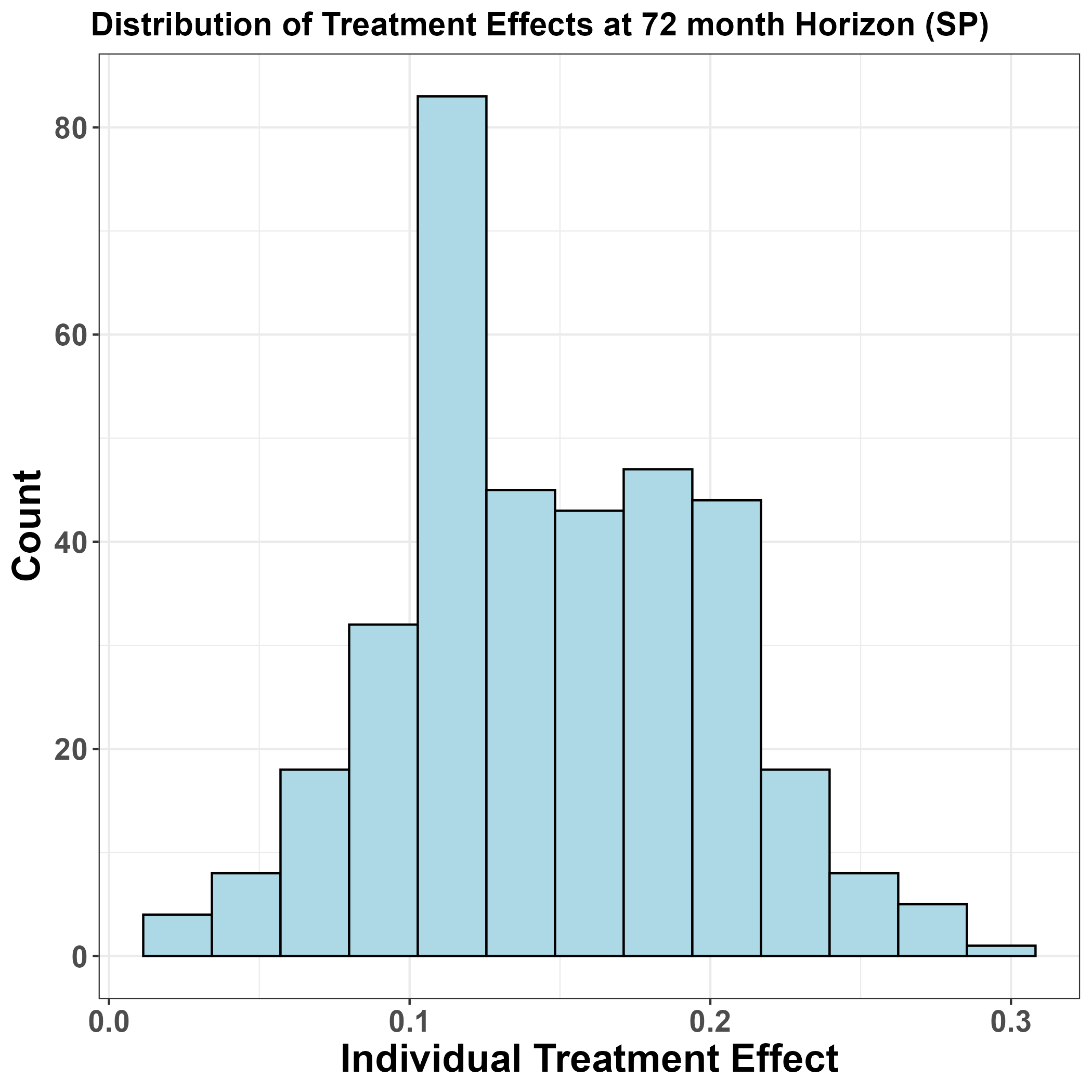}
        \caption{72 months}
    \end{subfigure}

    \vspace{0.75em}

    \begin{subfigure}[t]{0.28\textwidth}
        \centering
        \includegraphics[width=\textwidth]{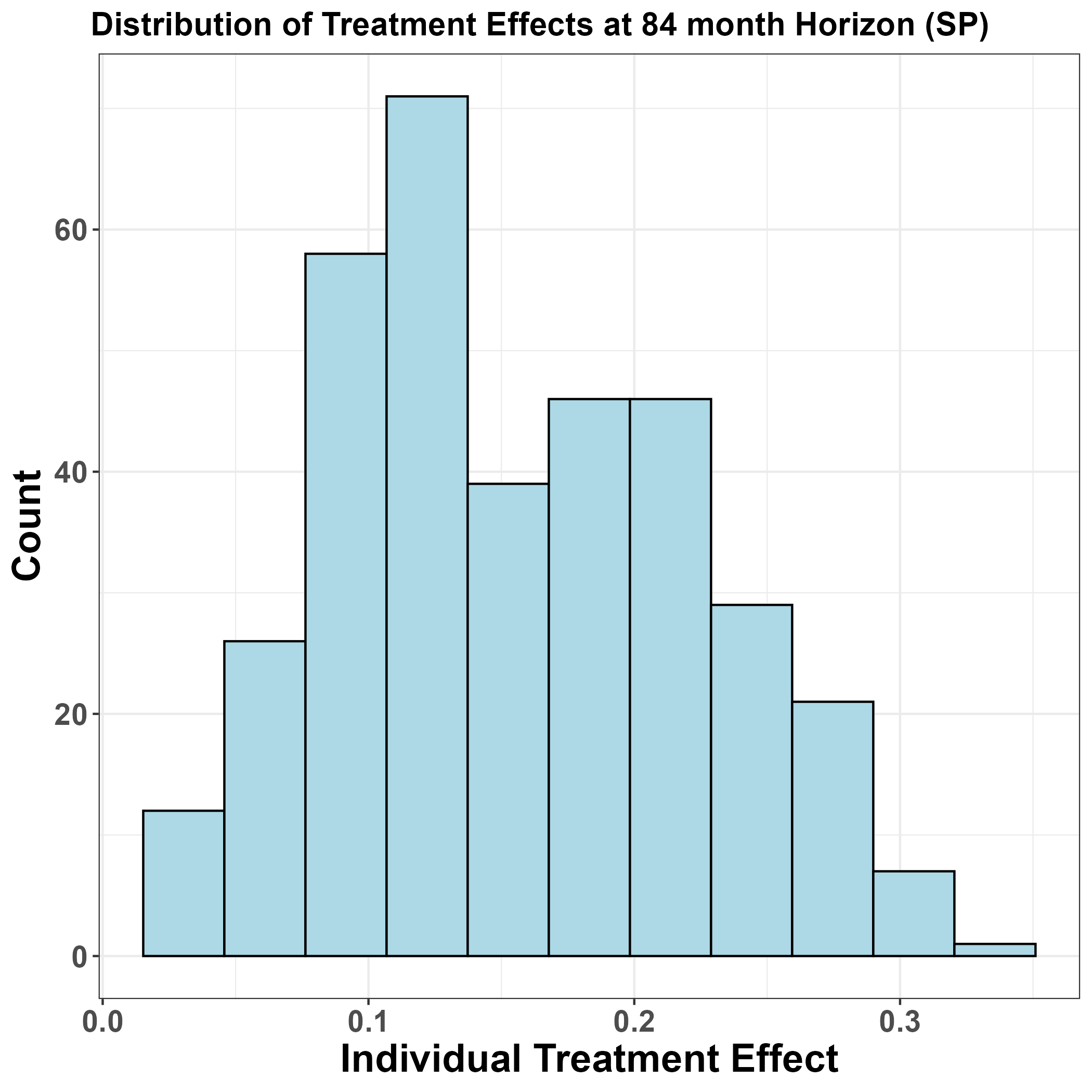}
        \caption{84 months}
    \end{subfigure}
    \hspace{0.02\textwidth}
    \begin{subfigure}[t]{0.28\textwidth}
        \centering
        \includegraphics[width=\textwidth]{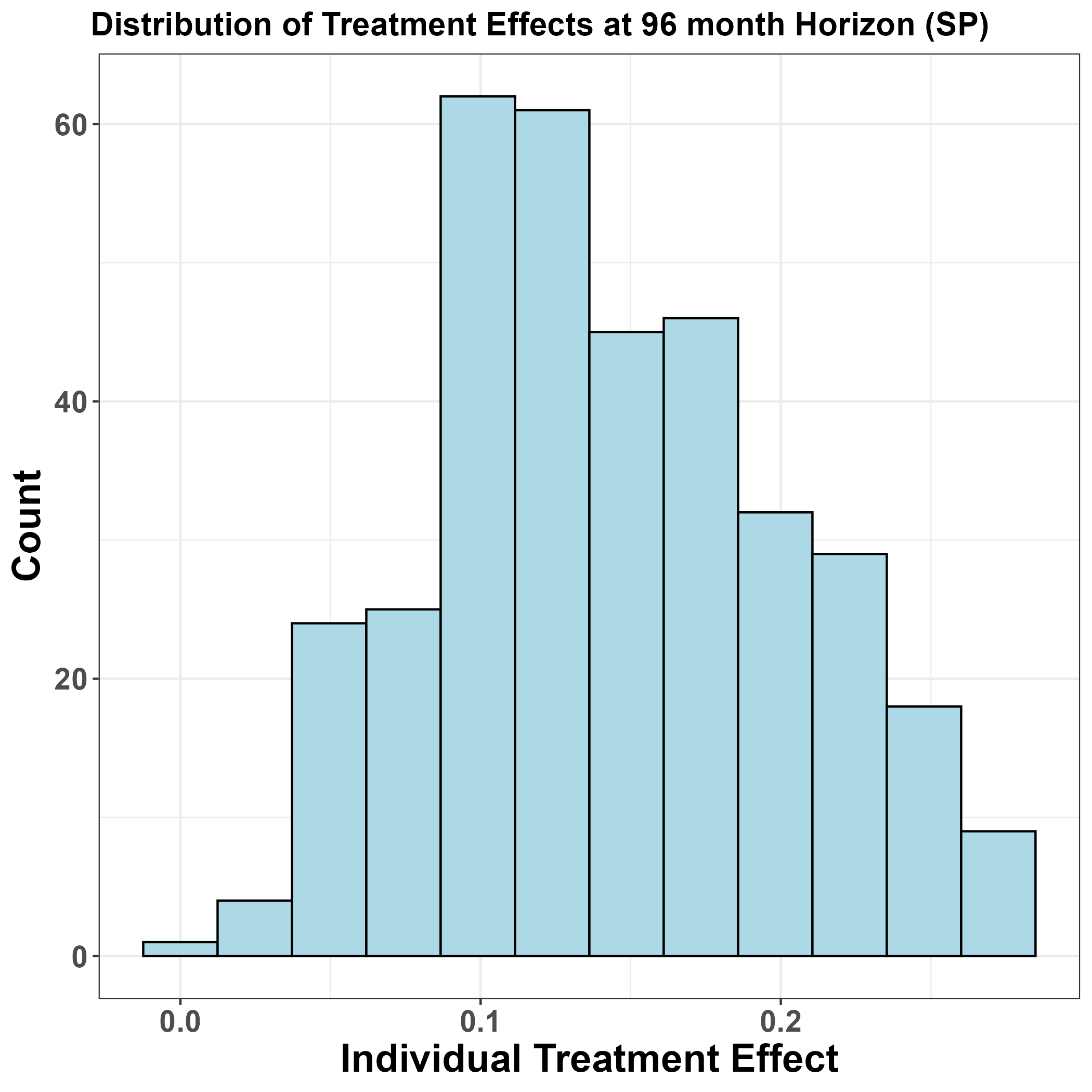}
        \caption{96 months}
    \end{subfigure}
    \hspace{0.02\textwidth}
    \begin{subfigure}[t]{0.28\textwidth}
        \centering
        \includegraphics[width=\textwidth]{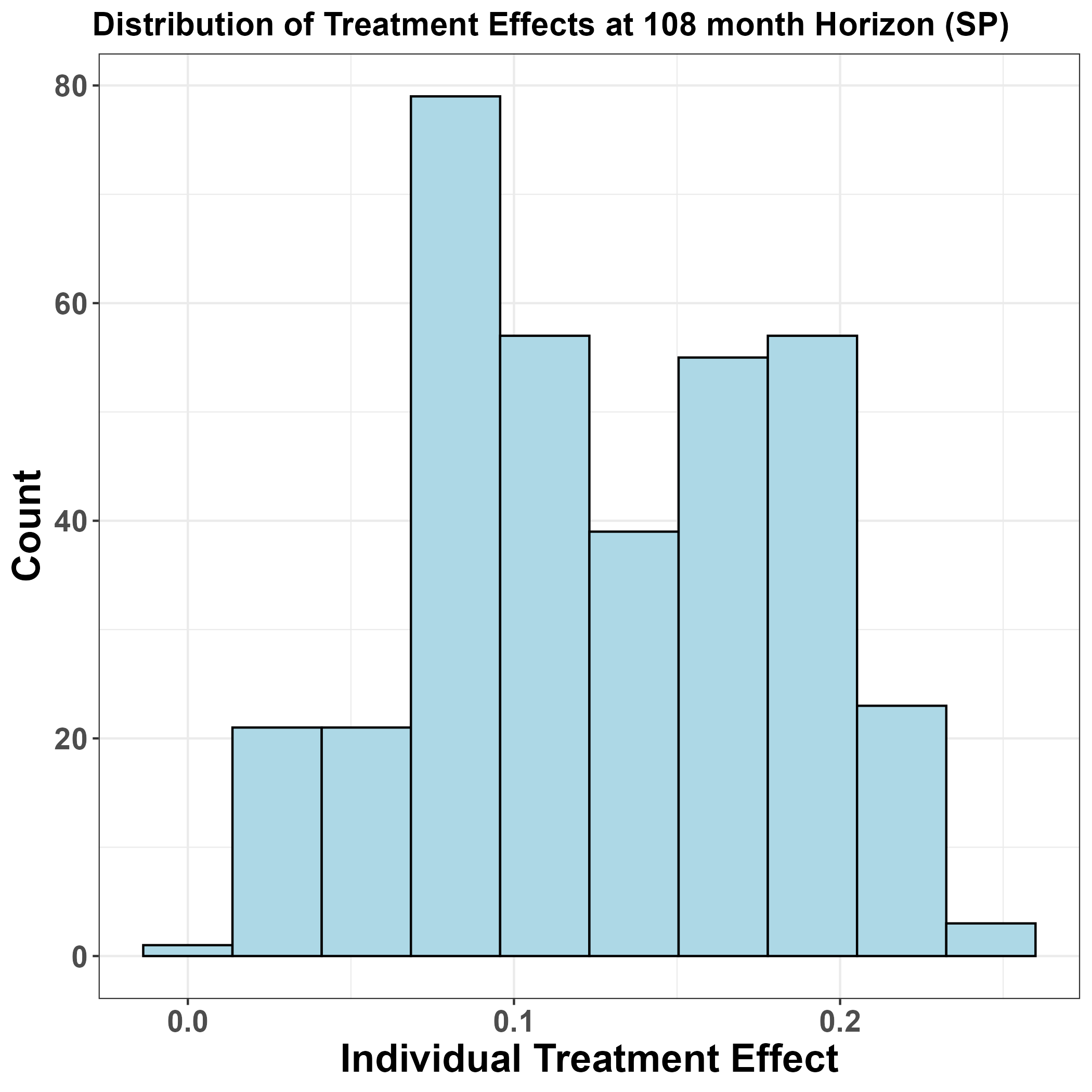}
        \caption{108 months}
    \end{subfigure}

    \vspace{0.75em}

    \begin{subfigure}[t]{0.28\textwidth}
        \hspace{0.34\textwidth}
        \includegraphics[width=\textwidth]{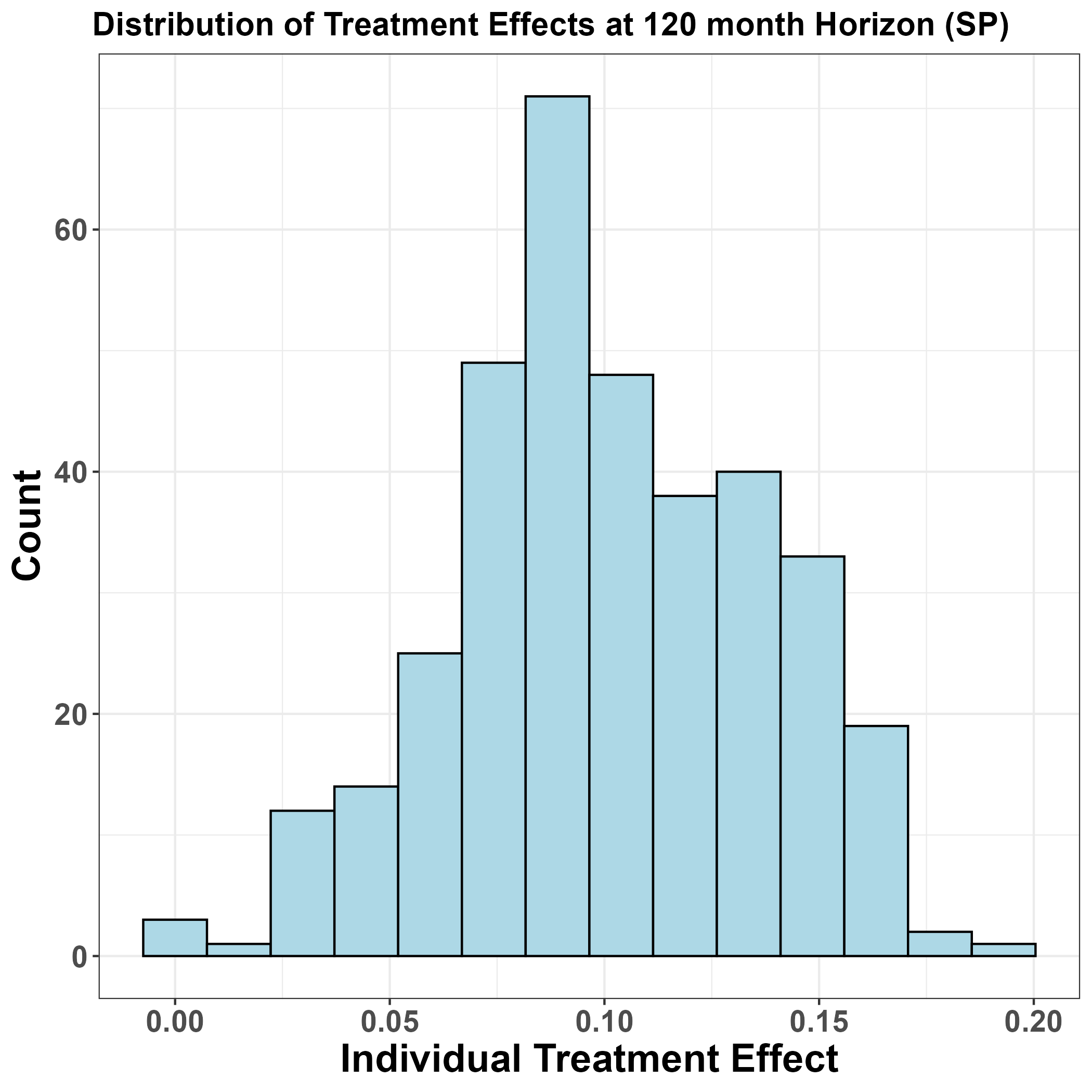}
        \caption{120 months}
    \end{subfigure}

    \vspace{1em}
    \caption{Distributions of estimated survival-probability-based treatment effects over time. Each panel shows the individual-level causal effect at a specific horizon as estimated by the causal survival forest.}
    \label{fig:sp_grid_centered}
\end{figure}

\clearpage

\subsection*{C.4 \quad Dummy Outcome Refutation Tests}

To assess whether CAST detects spurious treatment effects in the absence of a true signal, we performed dummy outcome tests. For each time horizon, we randomly shuffled treatment assignments and outcome times across 20 repetitions to simulate a null setting. If the model was overfitting or improperly attributing causal structure, it would produce non-zero treatment effect estimates even under randomization. As shown in the boxplots below, the estimated treatment effects for both RMST and survival probability are centered around zero, especially at relatively short times ($\leq 60$ months), when the number of patients still at risk was large. This confirms that CAST does not learn artifacts from the data and is robust to randomization of causal structure.

\vspace{0.3cm}

\begin{figure}[h]
    \centering
    \includegraphics[width=0.65\linewidth]{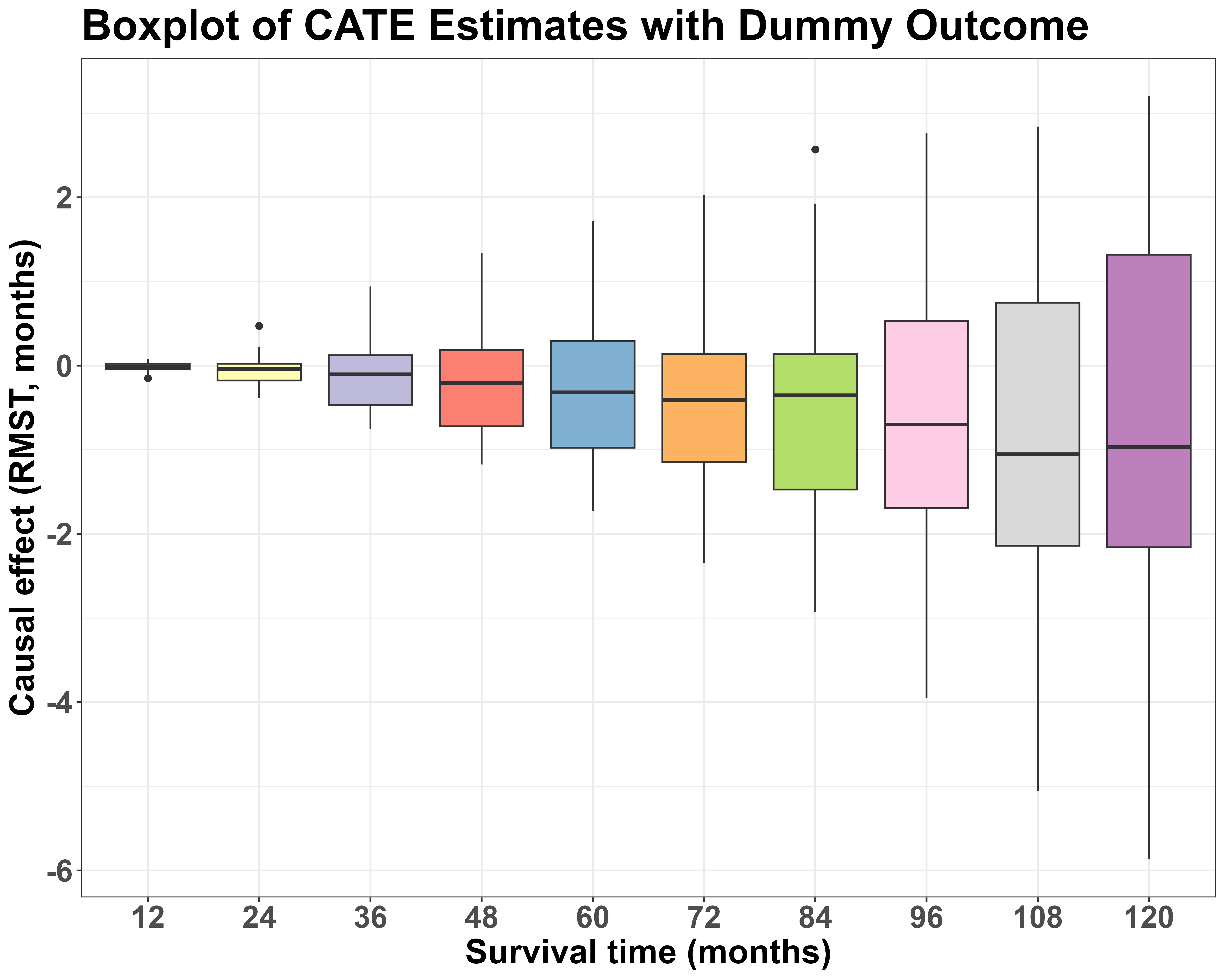}
    \caption{Dummy outcome test for RMST-based ATE estimates. Across 20 shuffles per horizon, treatment effects are centered near zero, consistent with the null.}
    \label{fig:dummy_rmst}
\end{figure}

\vspace{0.7em}

\begin{figure}[h]
    \centering
    \includegraphics[width=0.65\linewidth]{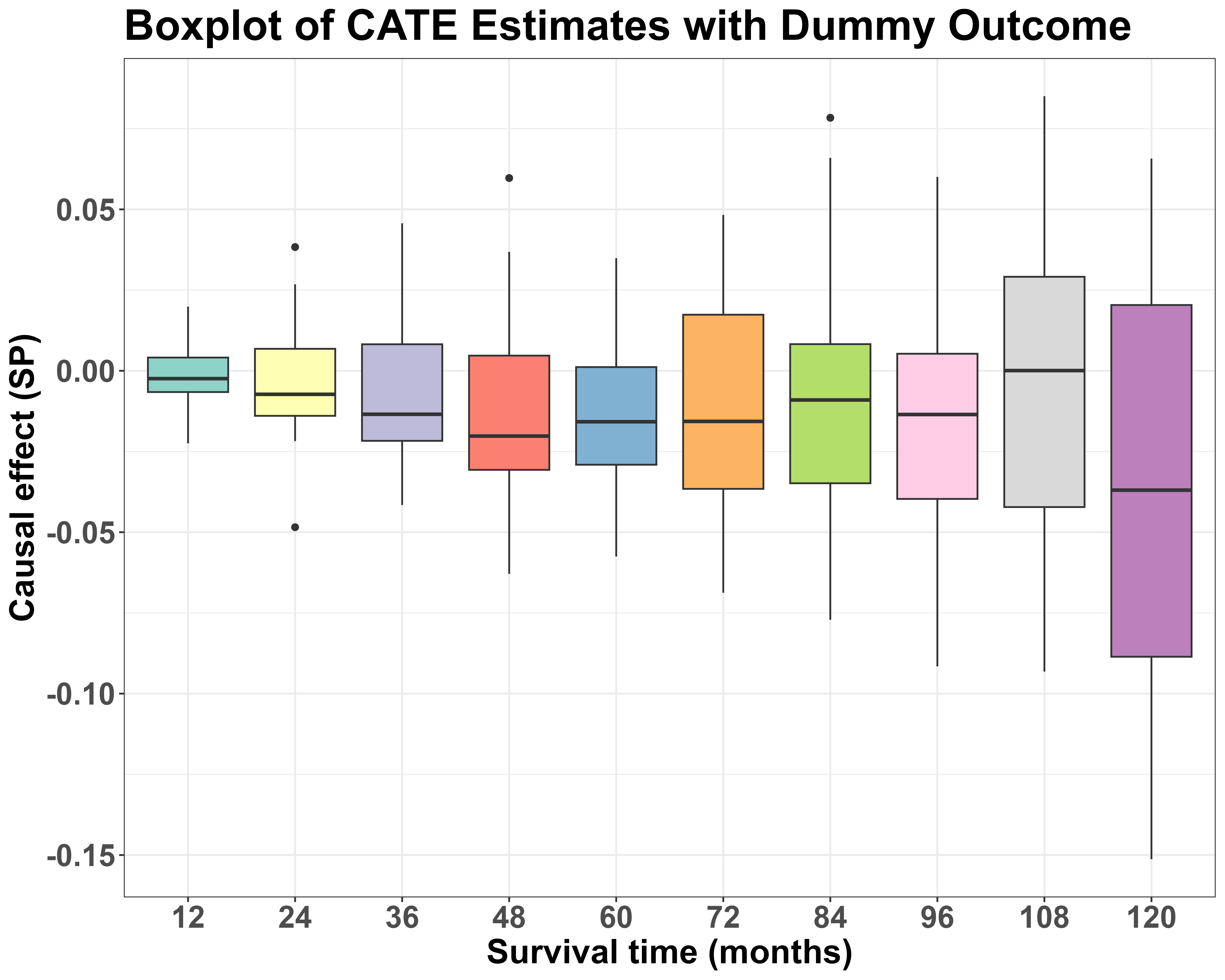}
    \vspace{0.1cm}
    \caption{Dummy outcome test for survival probability-based ATE estimates. The model correctly reports no significant treatment effects under randomized labels.}
    \label{fig:dummy_sp}
\end{figure}

\clearpage

To assess the robustness of CAST estimates to unobserved confounding, we performed a sensitivity analysis by injecting synthetic covariates with varying correlation to treatment assignment ($r = 0.1$, $0.3$, $0.5$). We then measured the resulting shifts in ATE estimates across time horizons for both RMST and survival probability outcomes.

\vspace{0.2cm}

\begin{figure}[ht]
    \centering
    \includegraphics[width=0.9\textwidth]{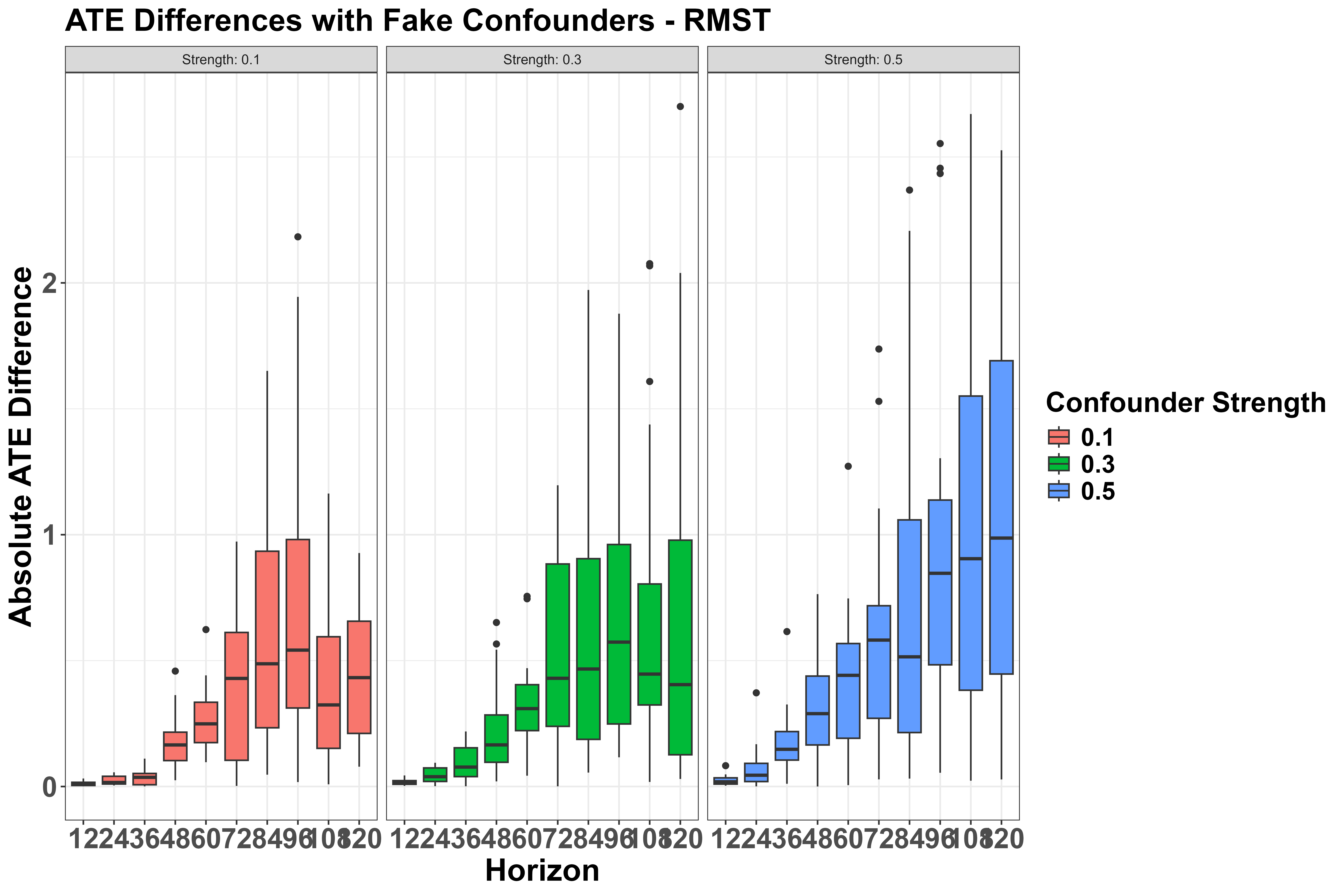}
    \caption{Absolute ATE differences in RMST under varying confounder strengths ($r = 0.1$, $0.3$, $0.5$). Estimates are stable under weak strengths but diverge at longer horizons and higher strengths.}

\end{figure}

\begin{figure}[ht]
    \centering
    \includegraphics[width=0.9\textwidth]{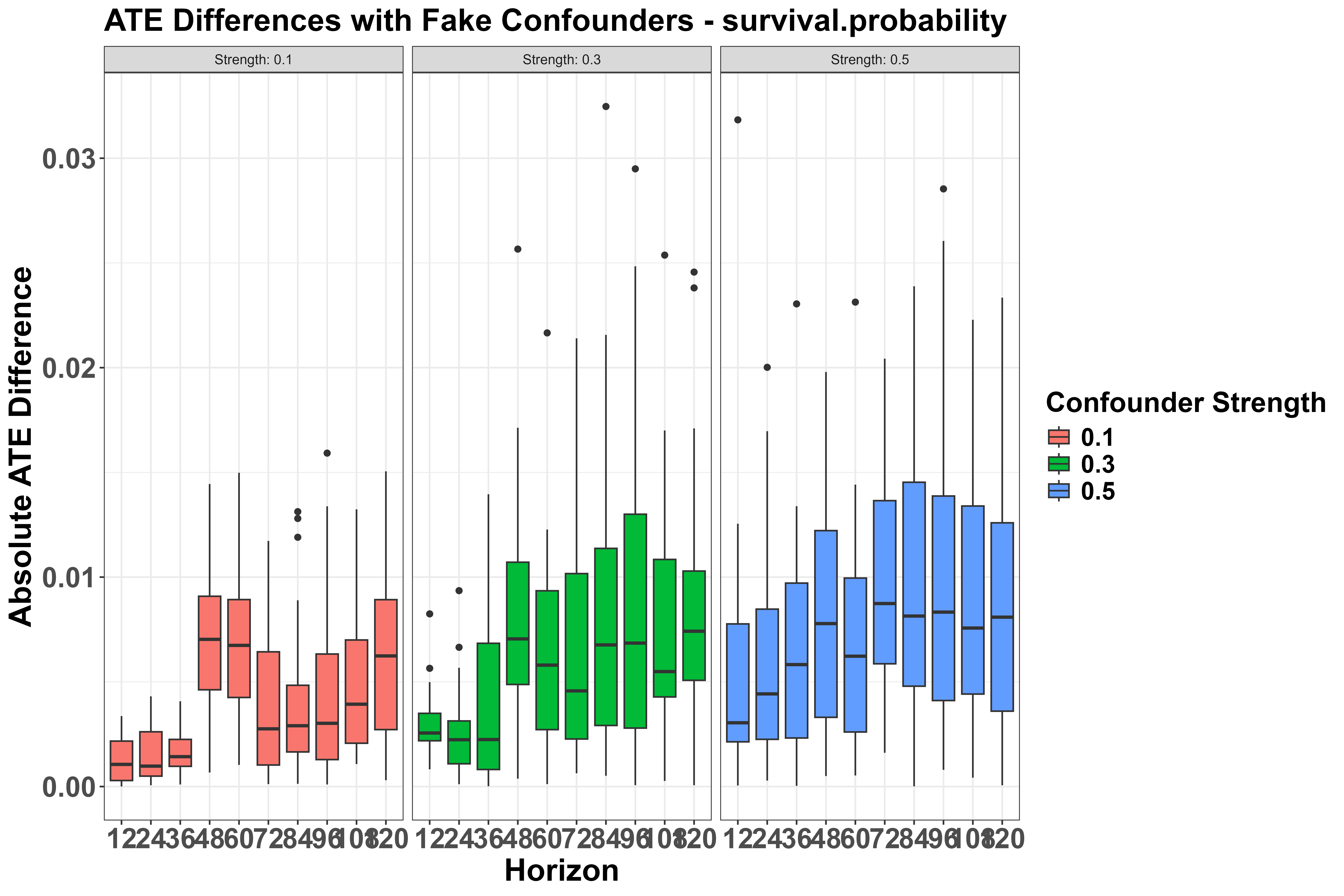}
    \caption{Absolute ATE differences in SP under varying confounder strengths ($r = 0.1$, $0.3$, $0.5$). CAST estimates remain stable under weak strengths, with modest shifts at stronger levels and longer horizons.}

\end{figure}

\clearpage

\end{document}